\newcommand{\cD}{\mathcal{D}}
\newcommand{\bA}{\mathbf{A}}
\newcommand{\bB}{\mathbf{B}}
\newcommand{\trrm}{\mathrm{tr}}
\newcommand{\term}{\mathrm{te}}
\newcommand{\auxrm}{\mathrm{aux}}
\newcommand{\mrm}{\mathrm{m}}
\newcommand{\nmrm}{\mathrm{nm}}
\newcommand{\adarm}{\mathrm{ada}}
\newcommand{\cL}{\mathcal{L}}
\newcommand{\plrm}{\mathrm{PL}}
\newcommand{\splrm}{\mathrm{SPL}}
\newtheorem{lemma}{Lemma}
\newtheorem{proposition}{Proposition}
\newtheorem{definition}{Definition}
\title{Privacy-Preserving Low-Rank Adaptation against Membership\\Inference Attacks for Latent Diffusion Models}
\author{
    Zihao Luo\textsuperscript{\rm 1}\equalcontrib,
    Xilie Xu\textsuperscript{\rm 2}\equalcontrib,
    Feng Liu\textsuperscript{\rm 3},
    Yun Sing Koh\textsuperscript{\rm 1},\\
    Di Wang\textsuperscript{\rm 4},
    Jingfeng Zhang\textsuperscript{\rm 1 \rm 4}\thanks{Correspondence to: Jingfeng Zhang \textless\protect\url{jingfeng.zhang@auckland.ac.nz}\textgreater.}
}
\begin{document}

\maketitle

\begin{abstract}
Low-rank adaptation (LoRA) is an efficient strategy for adapting latent diffusion models (LDMs) on a private dataset to generate specific images by minimizing the adaptation loss. However, the LoRA-adapted LDMs are vulnerable to membership inference (MI) attacks that can judge whether a particular data point belongs to the private dataset, thus leading to the privacy leakage. To defend against MI attacks, we first propose a straightforward solution: \underline{M}embership-\underline{P}rivacy-preserving \underline{LoRA} (MP-LoRA). MP-LoRA is formulated as a min-max optimization problem where a proxy attack model is trained by maximizing its MI gain while the LDM is adapted by minimizing the sum of the adaptation loss and the MI gain of the proxy attack model. However, we empirically find that MP-LoRA has the issue of unstable optimization, and theoretically analyze that the potential reason is the unconstrained local smoothness, which impedes the privacy-preserving adaptation. To mitigate this issue, we further propose a \underline{S}table \underline{M}embership-\underline{P}rivacy-preserving \underline{LoRA} (SMP-LoRA) that adapts the LDM by minimizing the ratio of the adaptation loss to the MI gain. Besides, we theoretically prove that the local smoothness of SMP-LoRA can be constrained by the gradient norm, leading to improved convergence. Our experimental results corroborate that SMP-LoRA can indeed defend against MI attacks and generate high-quality images. 
\end{abstract}

\begin{links}
    \link{Code}{https://github.com/WilliamLUO0/StablePrivateLoRA}
\end{links}

\section{Introduction}
Generative diffusion models~\citep{Ho2020denoising,Song2020score} are leading a revolution in AI-generated content, renowned for their unique generation process and fine-grained image synthesis capabilities. 
Notably, the Latent Diffusion Model (LDM)~\citep{Rombach2022high,Podell2023sdxl} stands out by executing the diffusion process in latent space, enhancing computational efficiency without compromising image quality.
Thus, LDMs can be efficiently adapted to generate previously unseen contents or styles~\citep{Meng2021sdedit,Gal2022image,Ruiz2023dreambooth,Zhang2023adding}, thereby catalyzing a surge across multiple fields, such as facial generation~\citep{Huang2023collaborative,Xu2024personalized} and medicine~\citep{Kazerouni2022diffusion,Shavlokhova2023finetuning}.

Among various adaptation methods, Low-Rank Adaptation (LoRA)~\citep{Hu2021lora} is the superior strategy for adapting LDMs by significantly reducing computational resources while ensuring commendable performance with great flexibility.
Compared to the full fine-tuning method which fine-tunes all parameters, LoRA optimizes the much smaller low-rank matrices, making the training more efficient and lowering the hardware requirements for adapting LDMs~\citep{Hu2021lora}.
By performing the low-rank decomposition of the transformer structure within the LDM, LoRA offers performance comparable to fine-tuning all LDM parameters~\citep{Pedro2023lorasd}.
Moreover, LoRA allows flexible sharing of a pre-trained LDM to build numerous small LoRA modules for various tasks.

However, recent studies~\citep{Pang2023black,Pang2023white,Dubinski2024towards} have pointed out that adapted LDMs are facing the severe risk of privacy leakage. 
The leakage primarily manifests in the vulnerability to Membership Inference (MI) attacks~\citep{Shokri2017membership}, which utilize the model's loss of a data point to differentiate whether it is a member of the training dataset or not.
As shown in Figure~\ref{fig:fig1d_Performance}, the LoRA-adapted LDM (red circle marker) exhibits an incredibly high Attack Success Rate (ASR) of 82.27\%.

\begin{figure*}[t]
    \centering
    \begin{minipage}{0.68\textwidth}
        \begin{subfigure}[t]{0.49\linewidth}
            \centering
            \includegraphics[width=0.96\linewidth,height=9cm,keepaspectratio]{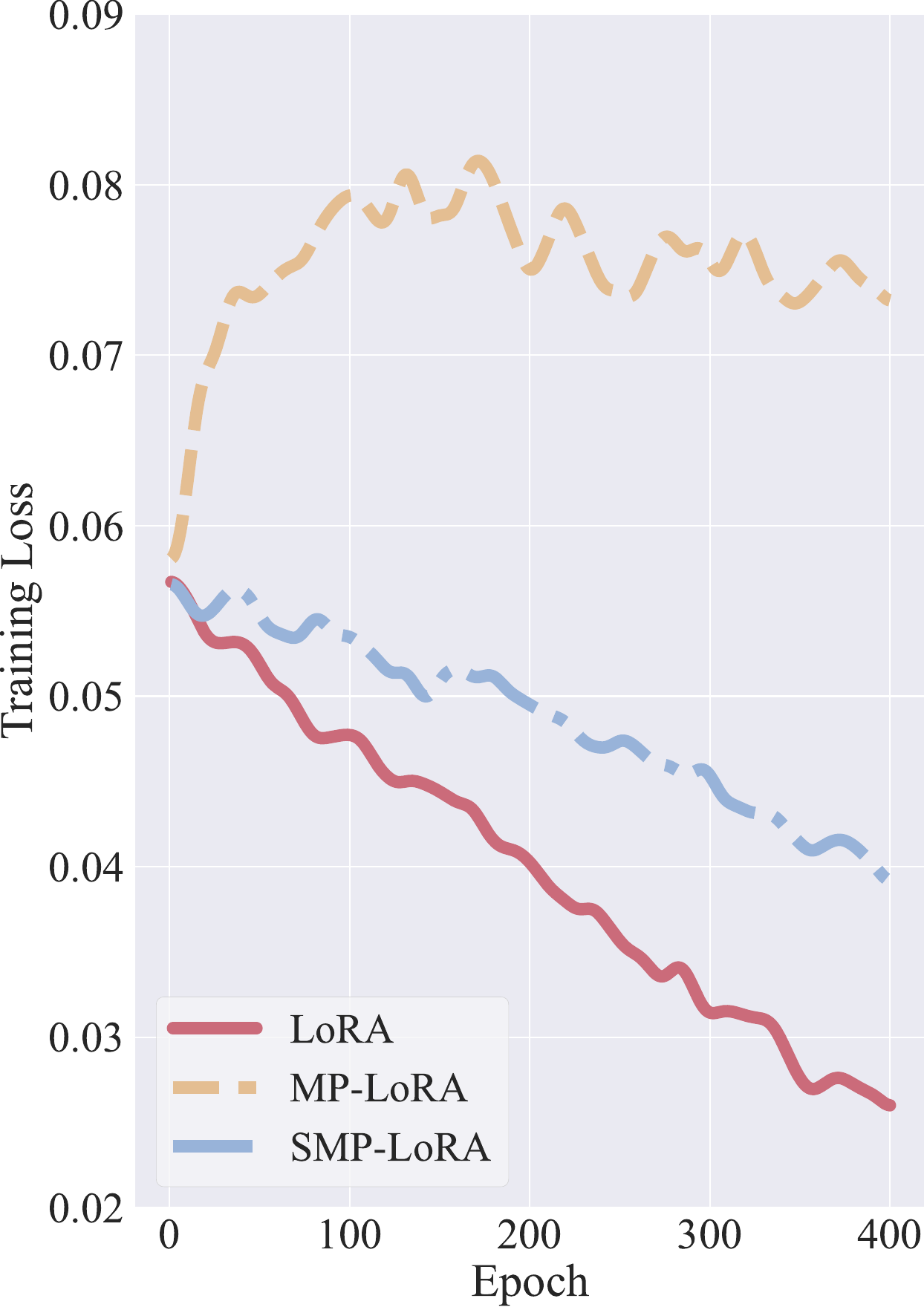}
            \caption{Training loss}
            \label{fig:fig1a_TrainingLoss}
        \end{subfigure}%
        \hfill
        \begin{subfigure}[t]{0.49\linewidth}
            \centering
            \includegraphics[width=\linewidth,height=9cm,keepaspectratio]{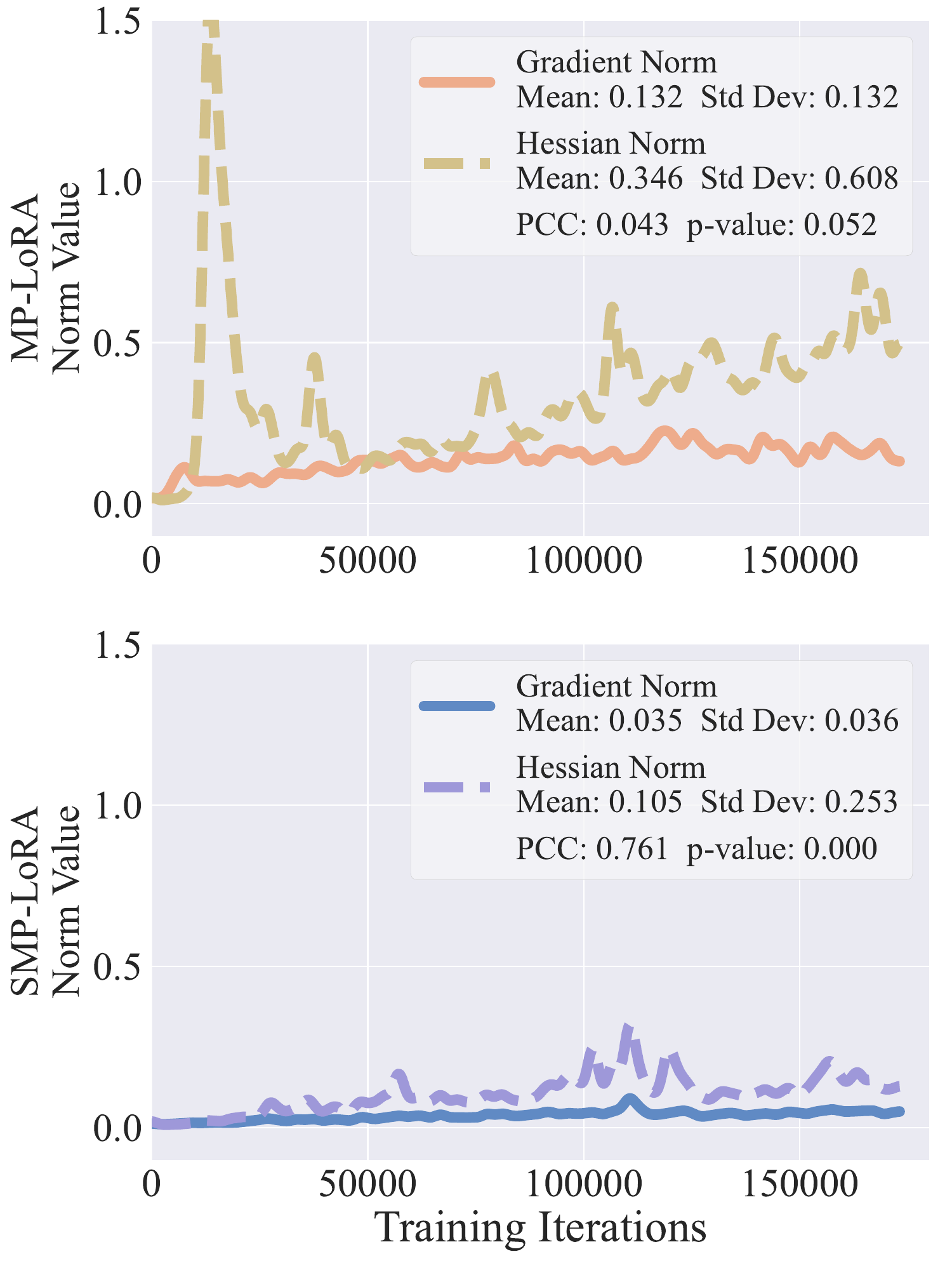}
            \caption{Gradient scale}
            \label{fig:fig1b_GradientHessian}
        \end{subfigure}
    \end{minipage}%
    \begin{minipage}{0.28\textwidth}
        \begin{subfigure}[t]{\linewidth}
            \centering
            \includegraphics[width=0.7\linewidth,height=\linewidth,keepaspectratio]{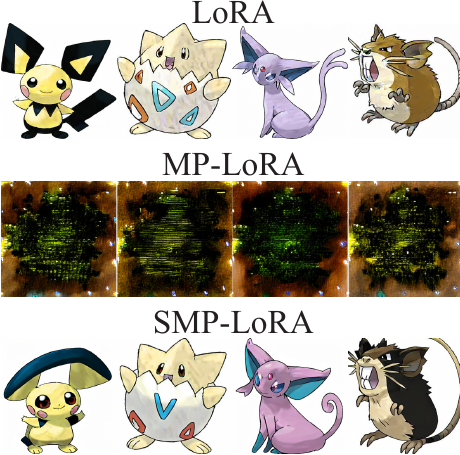}
            \caption{Generated results}
            \label{fig:fig1c_GeneratedResults}
        \end{subfigure}\\ 
        \begin{subfigure}[b]{\linewidth}
            \centering
         \includegraphics[width=\linewidth,height=\linewidth,keepaspectratio]{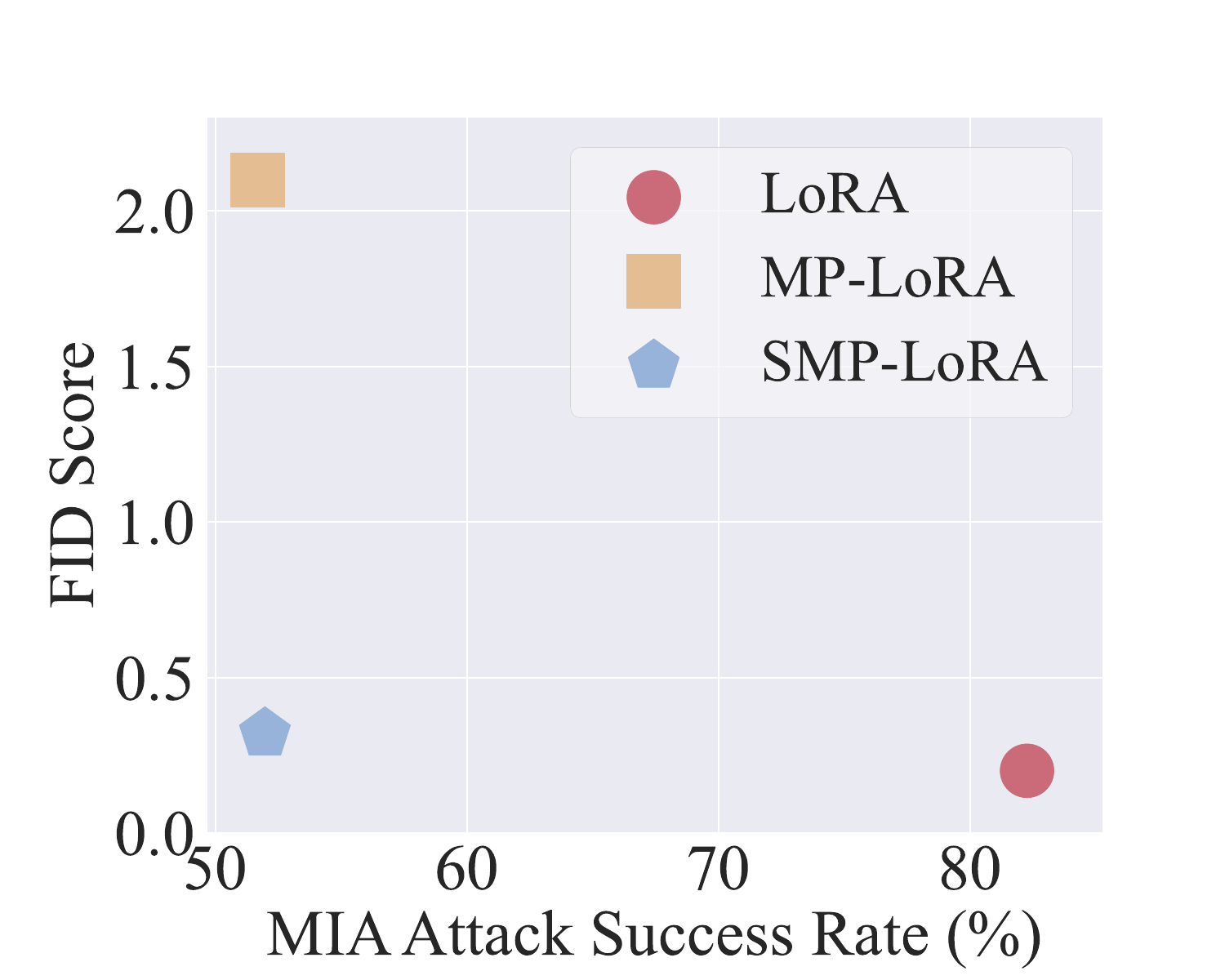} 
            \caption{Performance}
            \label{fig:fig1d_Performance}
        \end{subfigure}
    \end{minipage}
    \caption{Figure~\ref{fig:fig1a_TrainingLoss} shows the trajectory of the training loss during the adaptation process via LoRA, MP-LoRA, and SMP-LoRA on the Pokemon dataset. Figure~\ref{fig:fig1b_GradientHessian} displays the mean and standard deviation of the gradient norms and Hessian norms for MP-LoRA and SMP-LoRA throughout the training iterations. It also presents the Pearson correlation coefficients (PCC) and p-values assessing their correlation. Note that each epoch contains 433 training iterations. Figures~\ref{fig:fig1c_GeneratedResults} and~\ref{fig:fig1d_Performance} demonstrate the generated images and a comparison of evaluation metrics including FID Score and MI attack success rate (ASR). MP-LoRA preserves membership privacy but compromises image generation capability. In contrast, SMP-LoRA effectively preserves membership privacy while maintaining the quality of the generated image, demonstrating its effectiveness in defending against MI attacks without significant loss of functionality. Extensive generated images are visualized in Appendix~\ref{app:Visul_mpsmplora}.}
    \label{fig:fig1}
\end{figure*}

To mitigate the issue of privacy leakage, we make the first effort to propose a \underline{M}embership-\underline{P}rivacy-preserving \underline{LoRA} (MP-LoRA) method, which is formulated as a min-max optimization problem. 
Specifically, in the inner maximization step, a proxy attack model is trained to maximize its effectiveness in inferring membership privacy which is quantitatively referred to as MI gain.
In the outer minimization step, the LDM is adapted by minimizing the sum of the adaptation loss and the MI gain of the proxy attack model to enhance the preservation of membership privacy.

However, the vanilla MP-LoRA encounters an issue of effective optimization of the training loss, as evidenced in the orange dashed line of Figure~\ref{fig:fig1a_TrainingLoss}.
We theoretically find that during MP-LoRA, the local smoothness, quantified by the Hessian norm (the norm of the Hessian matrix)~\citep{Bubeck2015convex}, is independent of and not bounded by the gradient norm (see Proposition~\ref{pro:privatelora} for details).
This independence hinders the privacy-preserving adaptation of the MP-LoRA~\citep{Zhang2019gradient}, thus impeding optimizing the training loss.
Besides, we empirically show that the correlation between the Hessian norm and the gradient norm during MP-LoRA is insignificant.
This is manifested by the Pearson Correlation Coefficient (PCC) of 0.043 and the p-value above 0.05, as shown in the upper panel of Figure~\ref{fig:fig1b_GradientHessian}, which corroborates our theoretical analyses.

To stabilize the optimization procedure of MP-LoRA, we further propose a \underline{S}table \underline{M}embership-\underline{P}rivacy-preserving \underline{LoRA} (SMP-LoRA) method, which incorporates the MI gain into the denominator of the adaptation loss instead of directly summing it.
We theoretically demonstrate that this modification ensures a positive correlation (see Proposition~\ref{pro:spprivatelora} for details).
Specifically, the local smoothness (that is quantified by the Hessian norm) is positively correlated with and upper bounded by the gradient norm during adaptation, which can improve convergence.
Furthermore, we empirically corroborate that during SMP-LoRA, the Hessian norm is positively correlated with the gradient norm, as evidenced by the higher PCC (0.761) and the p-value of less than 0.001 in the lower panel of Figure~\ref{fig:fig1b_GradientHessian}.
The constrained local smoothness allows the SMP-LoRA to achieve better optimization, as shown in the blue dash-dot line of Figure~\ref{fig:fig1a_TrainingLoss}.

To evaluate the performance of the SMP-LoRA, we conducted adapting experiments using the Stable Diffusion v1.5~\citep{Compvis2024stable} on the Pokemon~\citep{Pinkney2022pokemon} and CelebA~\citep{Liu2015faceattributes} datasets, respectively.
Figure~\ref{fig:fig1d_Performance} shows that, although MP-LoRA (orange square marker) lowers the ASR to near-random levels, it significantly degrades the image generation capability of LoRA, as evidenced by a high FID score of 2.10 and the poor visual quality in Figure~\ref{fig:fig1c_GeneratedResults}.
In contrast, the SMP-LoRA (blue pentagon marker) effectively preserves membership privacy without sacrificing generated image quality significantly, as evidenced by its FID score of 0.32 and ASR of 51.97\%.

\section{Background and Preliminary}
This section outlines the related work and preliminary concepts in diffusion models, low-rank adaptation, and membership inference attacks.

\subsection{Diffusion Model}
Diffusion models (DMs)~\citep{Ho2020denoising,Song2020score} have shown remarkable performance in image synthesis.
Compared with other generative models such as GAN~\citep{Goodfellow2014GAN}, DMs can mitigate the problems of the training instability and the model collapse~\citep{Rombach2022high}, while achieving state-of-the-art results in numerous benchmarks~\citep{Dhariwal2021diffusion}.
Among various implementations of DMs, the Latent Diffusion Model (LDM) is renowned for generating high-quality images with limited computational resources, thereby widely utilized across many applications.
Fundamentally, LDM is a Denoising Diffusion Probabilistic Model (DDPM)~\citep{Ho2020denoising} built in the latent space, effectively reducing computational demand while enabling high-quality and flexible image generation~\citep{Rombach2022high}.
Therefore, LDMs have been widely utilized for adaptation~\citep{Gal2022image,Hu2021lora,Ruiz2023dreambooth}, capable of delivering high-performance models even when adapted on small-scale datasets.

Adapting LDMs involves a training process that progressively adds noise to the data and then learns to reverse the noise, finely tailoring the model's output.
To be specific, an image $x \in \mathcal{X}$ is initially mapped to a latent representation by a pre-trained encoder $\mathcal{E}: \mathcal{X} \rightarrow \mathcal{Z}$.
In the diffusion process, Gaussian noise $\epsilon \sim \mathcal{N}\left (0,1 \right) $ is progressively added at each time step $t = 1, 2, \ldots, T$, evolving $\mathcal{E}(x)$ into $z_{t} = \sqrt{\alpha_t }\mathcal{E}(x) + \sqrt{1 - \alpha _{t}}\epsilon$, where $\alpha _{t}\in [0,1]$ is a decaying parameter.
Subsequently, the model $f_{\theta } $ is trained to predict and remove noise $\epsilon$, therefore recovering $\mathcal{E}(x)$.
Building on this, a pre-trained decoder reconstructs the image from the denoised latent representation. 
Furthermore, to incorporate conditional information $y$ from various modalities, such as language prompts, a domain-specific encoder $\tau _{\phi }$ is introduced, projecting $y$ into an intermediate representation.
Given a pair $(x,y)$ consisting of an image $x$ and the corresponding text $y$, the adaptation loss for LDM is defined as follows:
\begin{equation}
\label{eq1_LDMLoss}
\ell_\adarm \left ( x, y; t, \epsilon, f_{\theta} \right ) = \left\| \epsilon - f_{\theta}(z_t, t, \tau_\phi(y)) \right\|_{2}^2 ,
\end{equation}
where $\tau_\phi$ refers to the pre-trained text encoder from CLIP~\citep{Radford2021learning}.

\subsection{Low-Rank Adaptation (LoRA)}
\label{sec2.2_LoRA}
To unleash the power of large pretrained models, many adaptation methods have emerged.
In particular, Low-Rank Adaptation (LoRA)~\citep{Hu2021lora} provides an efficient and effective solution by freezing the pre-trained model weights and introducing the trainable low-rank counterparts, significantly reducing the number of trainable parameters and memory usage during the LoRA adaptation process.
Therefore, LoRA not only lessens the demand for computational resources but also allows for the construction of multiple lightweight portable low-rank matrices on the same pre-trained LDM, addressing various downstream tasks~\citep{Pedro2023lorasd,Huggingface2023LoRA}.

Specifically, when adapting LDMs via LoRA, a low-rank decomposition is performed on each attention layer in the LDM backbone $f_{\theta}$. 
During LoRA, assuming that the original pre-trained weight $\theta \in \mathbb{R}^{d \times k}$, a trainable LoRA module $\bB \bA$ is randomly initialized and added to the pre-trained weights $\theta$, where $\mathbf{B} \in \mathbb{R}^{d \times r}$, $\mathbf{A} \in \mathbb{R}^{r \times k}$. Note that the rank $r$ is significantly less than $d$ or $k$, which ensures the computational efficiency of LoRA.
During the adaptation process via LoRA, for the augmented LDM backbone $f_{\bar{\theta} + \mathbf{B}\mathbf{A}}$, all trainable LoRA module parameters $\bB$ and $\bA$ are updated, while the original parameters $\bar{\theta}$ are frozen.
Given an image-test pair $(x,y)$, the adaptation loss during LoRA is formulated as follows:
\begin{equation}
\label{eq2_LoRALoss}
\ell_\adarm \left ( x, y ; f_{\tilde{\theta} + \bB\bA} \right ) = 
\left\| \epsilon - f_{\bar{\theta} + \bB\bA}(z_t, t, \tau_\phi(y)) \right\|_{2}^2.
\end{equation}
For notational simplicity, we omit the variables $t$ and $\epsilon$ in the adaptation loss $\ell_\adarm$.
Given the training dataset $\cD_\trrm = \{ (x_i,y_i) \}_{i=1}^n$ composed of $n \in \mathbb{N}^+$ image-text pairs, the training loss can be calculated as follows:
\begin{equation}
\label{eq3_LoRALossAverage}
\cL_{\adarm}(f_{\bar{\theta} + \bB\bA}, \cD_\trrm) = \frac{1}{n} \sum_{i=1}^n \ell_\adarm \left ( x_i, y_i; f_{\bar{\theta} + \bB\bA} \right ).
\end{equation}
Note that during the adaptation process via LoRA, the objective function that optimizes the parameters $\bB$ and $\bA$ is formulated as $ \min\limits_{\{\bB, \bA\}} \cL_{\adarm}(f_{\bar{\theta} + \bB\bA}, \cD_\trrm)$.

\subsection{Membership Inference Attack}
Membership Inference (MI) attack~\citep{Shokri2017membership} aims to determine whether a particular data point is part of a model's training set.
Recent studies~\citep{Carlini2023extracting,Duan2023diffusion} have shown that DMs are particularly vulnerable to MI attacks, thus undergoing high risks of privacy leakage.
MI attacks on diffusion models can be categorized based on the adversary's capabilities into white-box~\citep{Hu2023membership,Matsumoto2023membership,Pang2023white}, gray-box~\citep{Duan2023diffusion,Kong2023efficient,Fu2023probabilistic}, and black-box~\citep{Wu2022membership,Matsumoto2023membership,Pang2023black,Zhang2024generated} attacks.

In the black-box setting, \citet{Wu2022membership} were the first to explore MI attacks on DMs and achieved the highest success rate among the black-box attacks mentioned above.
They noted that DMs, when replicating training images, consistently produce outputs with higher fidelity and greater alignment with textual captions, indicating significant behavioural differences.
Therefore, they utilized the L2 distance between the embeddings of a given image and its corresponding caption-generated image to infer membership.

In the white-box setting, the gradient-based MI attack developed by \citet{Pang2023white} stands as the most effective method for DMs in terms of attack performance.
They leveraged the model's gradient to train an attack model for inference.
Additionally, \citet{Hu2023membership} and \citet{Matsumoto2023membership} employed a threshold-based MI attack by analyzing model loss at specific diffusion steps, which we refer to as the loss-based MI attack, also yielding significant attack performance.
In contrast, the black-box and gray-box MI attacks previously mentioned, which lack access to internal model information, achieve lower success rates compared to the white-box loss-based and gradient-based MI attacks~\citep{Pang2023white}.

Conventional techniques to defend against MI attacks include differential privacy~\citep{dwork2008differential,abadi2016deep}, min-max membership privacy game~\citep{Nasr2018machine}, data augmentation~\citep{devries2017improved,cubuk2020randaugment}, early stopping, etc.
To the best of our knowledge, these techniques have not been systematically applied or comprehensively evaluated in DMs.
Notably, some studies~\citep{dockhorn2022differentially,ghalebikesabi2023differentially,lyu2023differentially} have applied differential privacy to DMs to achieve a better privacy-accuracy trade-off, yet they have not evaluated its effectiveness against MI attacks in DMs.
Therefore, our paper takes the first step by developing a defensive adaptation based on the min-max membership privacy game to defend against MI attacks in DMs.


\section{Membership-Privacy-Preserving LoRA}
In this section, we first use the min-max optimization to formulate the learning objective of MP-LoRA. Then, we disclose the issue of unstable optimization of MP-LoRA. Finally, we propose the stable SMP-LoRA and its implementation. 

\subsection{A Vanilla Solution: MP-LoRA}
\label{sec3.1_PrivateLoRA}
\paragraph{Objective function.}
In MI attack, the conflicting objectives of defenders and adversaries can be modelled as a privacy game~\citep{Shokri2012protecting,Manshaei2013game,Alvim2017information}.
Adversaries can adjust their attack models to maximize MI gain against the target model, which requires that the defense can anticipate and withstand the strongest inference attacks.
Consequently, the defender's goal is to enhance the preservation of membership privacy in worst-case scenarios where the adversary achieves the maximum MI gain while maintaining the model performance.
Inspired by~\citet{Nasr2018machine}, we propose MP-LoRA to defend against MI attacks which is formulated as a min-max optimization problem as follows:
{\small
\begin{equation}
\label{eq4_PLoRAObjFun}
    \min_{\{\bB, \bA\}} \left ( 
    \underbrace{\cL_\adarm(f_{\bar{\theta} + \bB\bA}, \cD_\trrm)}_{\text{Adaptation loss}}
    + \lambda \underbrace{\max\limits_{\omega} G\left ( h_{\omega} , \cD_\auxrm, f_{\bar{\theta} + \bB\bA} \right )}
        _{\text{Membership inference gain}}
     \right ),
\end{equation}
}
where $\cL_\adarm(f_{\bar{\theta} + \bB\bA}, \cD_\trrm)$ refers to the adaptation loss for the LDM with LoRA module $f_{\bar{\theta} + \bB\bA}$ on the training dataset $\cD_\trrm$, $h_\omega$ is the proxy attack model parameterized by $\omega$, $G\left ( h_{\omega}, \cD_\auxrm, f_{\bar{\theta} + \bB\bA} \right )$ represents the MI gain of the proxy attack model $h_\omega$ on the auxiliary dataset $\cD_\auxrm$.

Therein, the inner maximization aims to search for the most effective proxy attack model $h_\omega$ for a given adapted LDM $f_{\bar{\theta} + \bB\bA}$ via maximizing the MI gain. 
The outer minimization, conversely, searches for the LDM $f_{\bar{\theta} + \bB\bA}$ that can best preserve membership privacy under the strong proxy attack model $h_\omega$ while being able to adapt on the training dataset.

\paragraph{Updating the proxy attack model in inner maximization.}
The proxy attack model $h_\omega$ equipped with white-box access to the target LDM $f_{\bar{\theta} + \bB\bA}$, aims to infer whether a specific image-text pair $(x, y)$ is from the training dataset $\cD_\trrm$ for adapting the target LDM $f_{\bar{\theta} + \bB\bA}$.
The model achieves this by constructing an auxiliary dataset $\cD_\auxrm$, which consists of half of the member data from $\cD_\trrm$, denoted as $\cD^\mrm_\auxrm$, and an equal amount of local non-member data $\cD^\nmrm_\auxrm$.
Using the auxiliary dataset $\cD_\auxrm$, $h_\omega$ trains a binary classifier based on the adaptation loss of the target LDM $f_{\bar{\theta} + \bB\bA}$ to predict the probability of $(x, y)$ for being a member of the $\cD_\trrm$.
Consequently, the MI gain of $h_\omega$ can be quantified based on its performance on the $\cD_\auxrm$ as follows:
\begin{align}
\label{eq5_MIGain}
& G\left ( h_{\omega} , \cD_\auxrm, f_{\bar{\theta} + \bB\bA} \right ) = \nonumber \\
& \frac{1}{2\left | \cD^\mrm_\auxrm \right | }\sum_{(x, y)\in \cD^\mrm_\auxrm}^{} \log_{}{\left (h_\omega\left ( \ell_\adarm\left ( x, y;f_{\bar{\theta} + \bB\bA} \right )   \right )\right )} \nonumber \\
& + \frac{1}{2\left | \cD^\nmrm_\auxrm \right | }\sum_{(x, y)\in \cD^\nmrm_\auxrm}^{} \log_{}{\left (1 - h_\omega\left ( \ell_\adarm\left ( x, y;f_{\bar{\theta} + \bB\bA} \right ) \right ) \right )}.
\end{align}
In the inner maximization, the proxy attack model optimizes the parameters $\omega$ by maximizing the MI gain, i.e., $\max\limits_{\omega}G\left ( h_{\omega}, \cD_\auxrm, f_{\bar{\theta} + \bB\bA} \right ) $.

\paragraph{Adapting the LDM in outer minimization.}
MP-LoRA optimizes the LDM by directly minimizing a weighted sum of the MI gain for the $h_\omega$ and the adaptation loss, which enables it to adapt to the training data and protect the private information of the training dataset simultaneously.
To be specific, the training loss of MP-LoRA is formulated as
\begin{equation}
\label{eq6_PLoRALoss}
\cL_\plrm = \cL_{\adarm}(f_{\bar{\theta} + \bB\bA}, \cD_\trrm) + \lambda \cdot G(h_{\omega }, \cD_\trrm ,f_{\bar{\theta} + \bB\bA}),
\end{equation}
where $\lambda \in \mathbb{R}$ controls the importance of optimizing the adaptation loss versus protecting membership privacy.
In the outer minimization of MP-LoRA, the parameters $\bB$ and $\bA$ is updated by minimizing the $\cL_\plrm$, i.e., $\min\limits_{\{ \bB, \bA \}} \cL_\plrm$.

MP-LoRA is realized by one step of inner maximization to obtain a power proxy attack model by maximizing the MI gain in Equation~\eqref{eq5_MIGain} and one step of outer minimization to update $\bA$ and $\bB$ by minimizing the training loss in Equation~\eqref{eq6_PLoRALoss}. The algorithm of MP-LoRA is shown in Algorithm~\ref{alg:algorithm2} (Appendix~\ref{app:algorithm2}).

\subsection{Unstable Issue of MP-LoRA}
\label{sec3.2_Issue}
In this subsection, we theoretically demonstrate that the convergence for MP-LoRA cannot be guaranteed due to unconstrained local smoothness. Then we validate the theoretical analyses with empirical evidence.

\begin{definition}[Relaxed Smoothness Condition from \citet{Zhang2019gradient}] \label{dif:relaxed_smoothness}
A second order differentiable function $f$ is $(L_0, L1)$-smooth if
\begin{equation}
\label{eq7_RelaxedSmoothnessCondition}
    \| \nabla^2 f(x) \| \le L_{0} + L_{1} \| \nabla f(x)\|.
\end{equation}
\end{definition}

\begin{lemma}[\citet{Zhang2019gradient}] \label{lem:relaxed_smoothness}
Let $f$ be a second-order differentiable function and $(L_0, L1)$-smooth. If the local smoothness, quantified by the Hessian norm (the norm of the Hessian matrix), is positively correlated with the gradient norm (i.e., $L_1 > 0$), then the gradient norm upper bounds the local smoothness, facilitating faster convergence and increasing the likelihood of converging to an optimal solution.
\end{lemma}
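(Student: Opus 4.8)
The statement bundles a definitional observation with a convergence claim, so the plan is to treat each separately. The first half---``the gradient norm upper bounds the local smoothness''---is just a rereading of Definition~\ref{dif:relaxed_smoothness}: $(L_0,L_1)$-smoothness \emph{is} the inequality $\|\nabla^2 f(x)\|\le L_0+L_1\|\nabla f(x)\|$, so the Hessian norm is already controlled by an affine function of the gradient norm, and $L_1>0$ says precisely that this dependence is genuine rather than the uniform bound $\|\nabla^2 f\|\le L_0$ of the classical smooth case. Nothing needs to be proved there beyond unwinding notation. The substance of the lemma is the second half, which I would establish by reproducing the clipped-gradient-descent argument of \citet{Zhang2019gradient} in two steps.

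Step~1 is a localized descent inequality. Fix $x$ and a target point $y$, parametrize the segment by $x_t=x+t(y-x)$ for $t\in[0,1]$, and set $g(t)=\|\nabla f(x_t)\|$. Differentiating and applying the smoothness bound gives the differential inequality $g'(t)\le\|\nabla^2 f(x_t)\|\,\|y-x\|\le(L_0+L_1 g(t))\|y-x\|$. A Gr\"onwall / integrating-factor estimate then yields $g(t)\le\bigl(g(0)+L_0/L_1\bigr)e^{L_1 t\|y-x\|}-L_0/L_1$, and restricting to $\|y-x\|\le 1/L_1$ bounds the exponential by $e$, so $g(t)\le e\,g(0)+(e-1)L_0/L_1$: the gradient norm stays within a constant factor of $L_0+L_1\|\nabla f(x)\|$ all along the segment. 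Integrating $f(y)-f(x)=\int_0^1\langle\nabla f(x_t),y-x\rangle\,dt$ once more and inserting this bound gives
\[
    f(y)\le f(x)+\langle\nabla f(x),y-x\rangle+\frac{C\,(L_0+L_1\|\nabla f(x)\|)}{2}\,\|y-x\|^2
\]
for an absolute constant $C$ (one gets $C=e$), valid whenever $\|y-x\|\le 1/L_1$.

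Step~2 feeds this into an adaptive step and telescopes. Apply the inequality to the clipped update $y=x-\eta_c\nabla f(x)$ with $\eta_c=\min\{\eta,\ \gamma/\|\nabla f(x)\|\}$. In the large-gradient regime the displacement has fixed length $\gamma$ and the effective smoothness $L_0+L_1\|\nabla f(x)\|$ is dominated by its $L_1\|\nabla f(x)\|$ term, so the per-step decrease is at least $\gamma\|\nabla f(x)\|-\tfrac{C}{2}L_1\gamma^2\|\nabla f(x)\|=\Theta(\gamma\|\nabla f(x)\|)$ once $\gamma\le 1/(CL_1)$---and this does \emph{not} shrink as $\|\nabla f(x)\|$ grows. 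In the small-gradient regime it is ordinary gradient descent with step $\eta$ against effective smoothness $\approx L_0$, giving the usual decrease of order $\eta\|\nabla f(x)\|^2$. Summing over $T$ iterations and using that $f$ is bounded below shows $\min_{t<T}\|\nabla f(x_t)\|\to 0$ at a rate depending on $L_0$ and $L_1$ separately, whereas a fixed-step method must take $\eta$ of order $1/(L_0+L_1\sup_t\|\nabla f(x_t)\|)$ or smaller and so pays the potentially enormous factor $\sup_t\|\nabla f(x_t)\|$; this gap is exactly the ``faster convergence'' and improved chance of reaching a stationary point asserted in the lemma.

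The main obstacle is Step~1: the Gr\"onwall control of $\|\nabla f\|$ along the segment, together with the book-keeping that keeps the restriction $\|y-x\|\le 1/L_1$ compatible with the clip threshold $\gamma$ chosen in Step~2. Once the localized descent inequality is in hand, Step~2 is a standard telescoping argument whose only subtlety is gluing the two regimes together without silently reintroducing a dependence on $\sup_t\|\nabla f(x_t)\|$.
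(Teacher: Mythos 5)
Your proposal is correct, but note that the paper itself offers no proof of this lemma at all: it is imported verbatim as a citation to \citet{Zhang2019gradient}, and the informal phrasing (``facilitating faster convergence\ldots'') is inherited from that source. Your reconstruction is essentially the argument of the cited reference: the Gr\"onwall control of $\|\nabla f\|$ along a segment of length at most $1/L_1$, the resulting localized descent inequality with effective smoothness $L_0+L_1\|\nabla f(x)\|$, and the two-regime telescoping analysis of clipped gradient descent, whose rate depends on $L_0$ and $L_1$ separately rather than on $L_0+L_1\sup_t\|\nabla f(x_t)\|$. You also correctly separate the two halves of the statement: the ``gradient norm upper bounds the local smoothness'' clause is just Definition~\ref{dif:relaxed_smoothness} read off with $L_1>0$, while the convergence clause is the substantive theorem. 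The only caveat worth recording is that the benefit is specifically for adaptively clipped or normalized updates relative to fixed-step descent; the lemma as used in Propositions~\ref{pro:privatelora} and~\ref{pro:spprivatelora} treats this as a qualitative statement about trainability, which your sketch makes precise.
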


\begin{proposition}
\label{pro:privatelora}
MP-LoRA does not satisfy the positive correlation as described in Lemma~\ref{lem:relaxed_smoothness}, therefore the convergence cannot be guaranteed and the model may settle at a suboptimal solution.
\end{proposition}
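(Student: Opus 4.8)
The plan is to differentiate the MP-LoRA training loss $\cL_\plrm$ twice with respect to the LoRA parameters and show that its Hessian norm carries a term whose magnitude is not controlled by the gradient norm; concretely, I will argue that no inequality $\| \nabla^2 \cL_\plrm \| \le L_0 + L_1 \| \nabla \cL_\plrm \|$ can hold with a genuinely positive $L_1$, so the positive-correlation hypothesis of Lemma~\ref{lem:relaxed_smoothness} fails and its accelerated-convergence conclusion does not apply. Fix the inner maximizer $h_\omega$ and collect the trainable parameters into $\bW = \mathrm{vec}(\bB, \bA)$, so that $\cL_\plrm(\bW) = \cL_\adarm(f_{\bar{\theta} + \bB\bA}, \cD_\trrm) + \lambda\, G(h_\omega, \cD_\trrm, f_{\bar{\theta} + \bB\bA})$. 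Because differentiation is linear, $\nabla \cL_\plrm = \nabla \cL_\adarm + \lambda \nabla G$ and $\nabla^2 \cL_\plrm = \nabla^2 \cL_\adarm + \lambda \nabla^2 G$, so it suffices to understand the two summands and how they combine.

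Next I would expand each summand by the chain rule. Writing $u_i = \ell_\adarm(x_i, y_i; f_{\bar{\theta} + \bB\bA})$ for the per-sample adaptation loss and $\psi_i(u) = \log h_\omega(u)$ for a member term (and $\log(1 - h_\omega(u))$ for a non-member term), one gets $\nabla_\bW \psi_i(u_i) = \psi_i'(u_i)\, \nabla_\bW u_i$ and $\nabla^2_\bW \psi_i(u_i) = \psi_i'(u_i)\, \nabla^2_\bW u_i + \psi_i''(u_i)\, \nabla_\bW u_i \, (\nabla_\bW u_i)^\top$. Thus $\nabla G$ is a weighted combination of the same direction vectors $\nabla_\bW u_i$ that also assemble $\nabla \cL_\adarm$, whereas $\nabla^2 G$ carries in addition a family of rank-one terms $\psi_i''(u_i)\, \nabla_\bW u_i (\nabla_\bW u_i)^\top$ whose prefactors $\psi_i''(u_i)$ depend only on the attack head $h_\omega$ and the loss value $u_i$, not on the aggregate gradient $\nabla \cL_\plrm$. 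This is the term that breaks the correlation: it injects curvature that is invisible to $\| \nabla \cL_\plrm \|$.

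Then I would exhibit a parameter configuration where $\| \nabla \cL_\plrm \|$ is small but $\| \nabla^2 \cL_\plrm \|$ is bounded away from zero and can in fact be made arbitrarily large. The adaptation term pushes members' losses $u_i$ down while the defender's MI-gain term pushes them up, so $\nabla \cL_\adarm$ and $\lambda \nabla G$ are generically opposed, and near a stationary point of $\cL_\plrm$ they nearly cancel, $\nabla \cL_\adarm \approx - \lambda \nabla G$, giving $\| \nabla \cL_\plrm \| \approx 0$. The Hessians, however, do not cancel: $\nabla^2 \cL_\adarm$ is to leading (Gauss--Newton) order a positive semidefinite term $\approx \tfrac{2}{n} \sum_i J_i^\top J_i$ with $J_i$ the Jacobian of $f_{\bar{\theta} + \bB\bA}$, while $\lambda \nabla^2 G$ contributes the indefinite rank-one pieces above with a free prefactor $\lambda$ that can be enlarged without disturbing the gradient cancellation. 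Hence $\| \nabla^2 \cL_\plrm \|$ cannot be absorbed into $L_0 + L_1 \| \nabla \cL_\plrm \|$ with a nontrivial positive $L_1$, i.e., the only admissible relaxed-smoothness description of $\cL_\plrm$ forces $L_1 = 0$: the Hessian norm is independent of, and not upper bounded by, the gradient norm. Since the hypothesis of Lemma~\ref{lem:relaxed_smoothness} fails, its conclusion does not transfer, so MP-LoRA may stall at a suboptimal stationary point, matching the orange trajectory in Figure~\ref{fig:fig1a_TrainingLoss}.

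To make the decoupling rigorous rather than heuristic, I would pin down the simplest explicit instance: a scalar LoRA parameter, an affine $f_{\bar{\theta} + \bB\bA}$, and a sigmoid attack head $h_\omega(u) = \sigma(\omega u + b)$, for which $\nabla \cL_\plrm$ and $\nabla^2 \cL_\plrm$ become closed-form one-variable expressions; one then checks directly that $\nabla \cL_\plrm$ has a zero at which $\nabla^2 \cL_\plrm \ne 0$ and scales linearly in $\lambda$. As Proposition~\ref{pro:privatelora} is a negative, universally quantified claim, a single such instance already refutes the positive correlation for MP-LoRA. The main obstacle is exactly this step: guaranteeing that the gradient-cancellation point is genuinely attained and that the Hessian stays large there, which needs either the explicit construction above or a mild non-degeneracy assumption ruling out the simultaneous accidental vanishing of $J_i$ and $\psi_i''(u_i)$; the remaining steps are routine chain-rule bookkeeping.
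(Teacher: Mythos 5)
Your opening step---using linearity of differentiation to write $\nabla^2 \cL_\plrm = \nabla^2\cL_\adarm + \lambda\,\nabla^2 G$ and observing that the triangle inequality then yields a relaxed-smoothness bound with $L_0 = \|\nabla^2\cL_\adarm\| + \lambda\|\nabla^2 G\|$ and $L_1 = 0$---is the paper's \emph{entire} proof (Appendix~\ref{app:SecondDerivative} stops exactly there). Everything you add afterwards (the chain-rule expansion of $G$ into rank-one curvature terms $\psi_i''(u_i)\,\nabla u_i(\nabla u_i)^\top$, the gradient-cancellation argument near stationary points, and the scalar sigmoid instance) goes beyond the paper, and you are right that something of this kind is needed to prove the proposition as a strict impossibility claim: exhibiting one valid bound with $L_1=0$ does not logically exclude a valid bound with $L_1>0$, since any $(L_0,0)$-smooth function is trivially $(L_0,L_1)$-smooth for every $L_1>0$. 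That said, your witness construction has its own gap in the same spirit: a single point where $\nabla\cL_\plrm=0$ but $\nabla^2\cL_\plrm\neq 0$ does not refute $(L_0,L_1)$-smoothness with positive $L_1$ either, because the additive constant $L_0$ absorbs the Hessian there; and making the Hessian ``arbitrarily large by enlarging $\lambda$'' changes the objective, so it shows non-uniformity across a family of losses rather than unboundedness for a fixed one. What both you and the paper actually establish is the structural observation that the natural Hessian bound for MP-LoRA contains no term proportional to $\|\nabla\cL_\plrm\|$, so the local-smoothness estimate does not tighten as the gradient shrinks---which is precisely the property Lemma~\ref{lem:relaxed_smoothness} exploits. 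Read as that structural statement, your derivation matches and indeed refines the paper's; read as a rigorous negative claim, both arguments would need a sharper formalization of ``positive correlation'' than is currently given.
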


\begin{proof}
We establish the Relaxed Smoothness Condition for MP-LoRA as follows:
\begin{align}
\label{eq8_PLoRA_RSC}
    & \|\frac{\partial^2 \mathcal{L}_{\mathrm{PL}}}{\partial \mathbf{B}\mathbf{A}^2}\| \le L_{0} + L_{1} \|\frac{\partial \mathcal{L}_{\mathrm{PL}}}{\partial \mathbf{B}\mathbf{A}}\|, \nonumber \\
    & \mathrm{where}~~L_{0} = \|\frac{\partial^2 \mathcal{L}_{\mathrm{ada}}}{\partial \mathbf{B}\mathbf{A}^2}\| + \lambda \|\frac{\partial^2 G}{\partial \mathbf{B}\mathbf{A}^2}\|,~L_{1} = 0,
\end{align}
in which $\cL_\adarm$ represents the adaptation loss and $G$ represents the MI gain. The detailed derivation is presented in Appendix~\ref{app:SecondDerivative}. The value of $L_1$ being zero indicates that the Hessian norm is independent of and not bounded by the gradient norm, suggesting that the local smoothness is unconstrained.
\end{proof}

Next, we provide empirical evidence to support our theoretical analyses. We tracked the gradient norm and the Hessian norm of the training loss at each training iteration, and calculated their Pearson Correlation coefficient (PCC) and p-value as shown in Figure~\ref{fig:fig1b_GradientHessian}. The details for calculating the gradient norm and the Hessian norm can be found in Appendix~\ref{app:CalGradHess}. In Figure~\ref{fig:fig1b_GradientHessian}, the low PPC of 0.043 for MP-LoRA suggests a very weak correlation between the Hessian norm and the gradient norm. Additionally, with the p-value of 0.052, there is insufficient evidence to reject the hypothesis of no correlation. This indicates that the Hessian norm is unbounded, implying that the local smoothness, quantified by the Hessian norm~\citep{Bubeck2015convex}, is unconstrained. Such unconstrained local smoothness leads to the unstable optimization issue in MP-LoRA, and even to the failure of adaptation, as evidenced in the orange dashed line of Figure~\ref{fig:fig1a_TrainingLoss} and the poor visual quality of the generated images in Figure~\ref{fig:fig1c_GeneratedResults}.

\subsection{Stabilizing MP-LoRA}
\label{sec3.3_Stabilizing}
To mitigate the aforementioned optimization issue of MP-LoRA, we propose SMP-LoRA by incorporating the MI gain into the denominator of the adaptation loss. The objective function of SMP-LoRA is formulated as follows:
\begin{equation}
\label{eq9_SPLoRAObjFun}
    \min_{\{\bB, \bA\}} \left ( 
    \frac{\cL_\adarm(f_{\bar{\theta} + \bB\bA}, \cD_\trrm)}{
    1 - \lambda \max\limits_{\omega} G\left ( h_{\omega} , \cD_\auxrm, f_{\bar{\theta} + \bB\bA} \right ) 
    } \right ).
\end{equation}
To optimize Equation~\eqref{eq9_SPLoRAObjFun}, SMP-LoRA targets to minimize the following training loss function, i.e.,
\begin{equation}
\label{eq10_SPLoRALoss}
\cL_\splrm = \frac{\cL_{\adarm}(f_{\bar{\theta} + \bB\bA}, \cD_\trrm)}{1 - \lambda \cdot G\left ( h_{\omega} , \cD_\trrm, f_{\bar{\theta} + \bB\bA} \right ) + \delta } ,
\end{equation}
where $\delta$ is a stabilizer with a small value such as $1e-5$. This prevents the denominator from approaching zero and ensures stable calculation.

The implementation of SMP-LoRA is detailed in Algorithm~\ref{alg:algorithm1}. At each training step, SMP-LoRA will first update the proxy attack model by maximizing the MI gain and then update the LDM by minimizing the training loss $\cL_\splrm$.

\begin{algorithm}[tb]
\caption{Stable Membership-Privacy-preserving LoRA}
\label{alg:algorithm1}
\textbf{Input}: Training dataset $\cD_\trrm$ for adaptation process, Auxiliary dataset $\cD_\auxrm = \cD^\mrm_\auxrm \cup \cD^\nmrm_\auxrm$, a pre-trained LDM $f_{\theta}$, a proxy attack model $h_{\omega}$ parameterized by $\omega$, learning rate $\eta_1$ and $\eta_2$ \\
\textbf{Output}: a SMP-LoRA for LDMs
\begin{algorithmic}[1] 
\STATE Perform low-rank decomposition on $f_{\theta}$ to obtain $f_{\bar{\theta} + \bB\bA}$ ($\bB$ and $\bA$ are trainable LoRA modules)
\FOR{each epoch}
\FOR{each training iteration}
\STATE Sample batches $S^\mrm$ and $S^\nmrm$ from $\cD^\mrm_\auxrm$ and $\cD^\nmrm_\auxrm$
\STATE Calculate the MI gain $G^*$ on $S^\mrm \cup S^\nmrm$
\STATE Update the parameters $\omega \gets \omega + \eta_1 \cdot \nabla_\omega G^*$.
\STATE Sample a fresh batch from $\cD_\trrm$
\STATE Calculate the training loss $\cL^* = \cL_\splrm$
\STATE Update parameters $\bA \gets \bA - \eta_2 \cdot \nabla_\bA \cL^*$ and $\bB \gets \bB - \eta_2 \cdot \nabla_\bB \cL^*$, respectively
\ENDFOR
\ENDFOR
\end{algorithmic}
\end{algorithm}

\begin{proposition}
\label{pro:spprivatelora}
SMP-LoRA satisfies the positive correlation as described in Lemma~\ref{lem:relaxed_smoothness}, thus promoting faster convergence, and the model is more likely to converge to an optimal solution.
\end{proposition}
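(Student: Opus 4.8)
The plan is to prove Proposition~\ref{pro:spprivatelora} by exhibiting an explicit Relaxed Smoothness Condition (Definition~\ref{dif:relaxed_smoothness}) for $\cL_\splrm$ with respect to the composite variable $\bB\bA$, exactly as in the proof of Proposition~\ref{pro:privatelora}, but now producing a decomposition whose gradient coefficient $L_1$ is strictly positive. Write $N = \cL_{\adarm}(f_{\bar{\theta}+\bB\bA},\cD_\trrm)$ for the numerator and $D = 1 - \lambda\, G(h_\omega,\cD_\trrm,f_{\bar{\theta}+\bB\bA}) + \delta$ for the denominator, so that $\cL_\splrm = N/D$, and abbreviate $N' = \partial N/\partial \bB\bA$, $N'' = \partial^2 N/\partial \bB\bA^2$, and similarly $D', D''$. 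I would record as standing assumptions that the noise-prediction residual and the sigmoid-valued attack model — hence $N, N', N'', G, G', G''$ — are uniformly bounded, and that $\lambda G \le 1$, so that the stabilizer $\delta$ forces $D \ge \delta > 0$; these are what make all the constants below finite.

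First I would differentiate once via the quotient rule to get
\begin{equation}
\frac{\partial \cL_\splrm}{\partial \bB\bA} = \frac{N'}{D} - \cL_\splrm\,\frac{D'}{D},
\end{equation}
and then solve this identity for $N'/D$. Differentiating a second time, expanding every product, and substituting this expression for $N'/D$ wherever it appears causes the $\cL_\splrm (D')^2/D^2$ terms to cancel, leaving the clean decomposition
\begin{equation}
\frac{\partial^2 \cL_\splrm}{\partial \bB\bA^2} = \frac{1}{D}\left( N'' - \cL_\splrm\, D'' \right) \;-\; \frac{2D'}{D}\cdot\frac{\partial \cL_\splrm}{\partial \bB\bA}.
\end{equation}
Taking norms and applying the triangle inequality gives $\big\|\partial^2\cL_\splrm/\partial\bB\bA^2\big\| \le L_0 + L_1 \big\|\partial\cL_\splrm/\partial\bB\bA\big\|$ with $L_0 = \sup\big\| D^{-1}(N'' - \cL_\splrm D'')\big\|$ (finite by the boundedness assumptions, since $\cL_\splrm = N/D$ is itself bounded) and $L_1 = \sup\big\| 2D'/D\big\| = 2\lambda\,\sup\big\| D^{-1}\,\partial G/\partial\bB\bA\big\|$. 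Because the MI gain $G$ genuinely varies with the LoRA parameters $\bB\bA$ — unlike the additive objective of MP-LoRA, where the analogous cross term is identically absent and $L_1$ is forced to $0$ — the quantity $\partial G/\partial\bB\bA$ is not identically zero, hence $L_1 > 0$. Invoking Lemma~\ref{lem:relaxed_smoothness} then yields the claimed positive correlation, the gradient-norm bound on the local smoothness, and the improved convergence. The full second-derivative computation would be deferred to the appendix, alongside the one for Proposition~\ref{pro:privatelora}.

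The main obstacle I anticipate is not the algebra but justifying that this is the \emph{honest} decomposition rather than one in which the cross term could be swept into a purely constant $L_0$: one must argue that $\partial D/\partial\bB\bA$ is comparable in magnitude to $\partial\cL_\splrm/\partial\bB\bA$ along the training trajectory, so that the $L_1\|\nabla\cL_\splrm\|$ term is genuinely active and the correlation is genuinely positive. Establishing the uniform bounds needed for $L_0$ — in particular controlling $D$ strictly away from zero without assuming it a priori — is the other technical point; here the stabilizer $\delta$ in Equation~\eqref{eq10_SPLoRALoss} together with $\lambda G \le 1$ does the work, and I would make this dependence explicit rather than implicit.
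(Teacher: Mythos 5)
Your proposal is correct and follows essentially the same route as the paper: applying the quotient rule twice, substituting the first-derivative identity to isolate a term proportional to $\frac{\partial \cL_\splrm}{\partial \bB\bA}$, and reading off $L_1 = \frac{2\lambda\|\partial G/\partial \bB\bA\|}{1-\lambda G+\delta} > 0$ versus $L_1 = 0$ for the additive MP-LoRA objective — your decomposition $\frac{1}{D}(N'' - \cL_\splrm D'') - \frac{2D'}{D}\cdot\frac{\partial\cL_\splrm}{\partial\bB\bA}$ is term-for-term the paper's Equation~\eqref{eq11_SPLoRA_RSC}, just derived more compactly. Your added remarks on uniform boundedness and on why the cross term cannot be absorbed into $L_0$ are reasonable refinements the paper leaves implicit, but they do not change the argument.
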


\begin{proof}
We establish the Relaxed Smoothness Condition for SMP-LoRA as follows:
\begin{align}
\label{eq11_SPLoRA_RSC}
& \|\frac{\partial^2 \mathcal{L}_{\mathrm{SPL}}}{\partial \mathbf{B}\mathbf{A}^2}\| \le L^\prime_{0} + L^\prime_{1}\|\frac{\partial \mathcal{L}_{\mathrm{SPL}}}{\partial \mathbf{B}\mathbf{A}}\|, \nonumber \\
& \mathrm{where}~~\mu = \frac{\partial \cL_\adarm}{\partial \bB\bA},~\nu = \lambda \frac{\partial G}{\partial \bB\bA}, \nonumber \\
& L^\prime_{0} = \frac{1}{1-\lambda G + \delta} \cdot \|\frac{\partial^2 \cL_\adarm}{\partial \bB\bA^2}\| + \frac{\lambda \cL_\adarm}{(1-\lambda G + \delta)^2}\cdot \|\frac{\partial G^2}{\partial \bB\bA^2}\|, \nonumber \\
& L^\prime_{1} = \frac{2\|\nu\|}{1-\lambda G + \delta}. 
\end{align}
Please refer to Appendix~\ref{app:SecondDerivative} for detailed derivation. The value of $L^\prime_{1}$ being greater than zero indicates that the Hessian norm is positively correlated with and upper bounded by the gradient norm, suggesting that the gradient norm constrains the local smoothness during adaptation. 
\end{proof}

\begin{table*}[t]
  \caption{Performance of LoRA, MP-LoRA, and SMP-LoRA across five datasets, as measured by FID, KID, ASR, AUC, and TPR at $5\%$ FPR. Results are presented as mean $\pm$ standard error, based on three independent runs with different seeds. Due to the limited size of the CelebA$\_$Small, CelebA$\_$Gender, and CelebA$\_$Varying datasets, the FID scores are not available. It is important to note that an AUC value closer to 0.5 (random level) indicates a stronger defensive capability against MI attacks.}
  \label{tab:tab1_Performance}
  \centering
  \begin{tabular}{llccccc}
    \toprule
    Dataset & Method & FID \textbf{$\downarrow$} & KID \textbf{$\downarrow$} & ASR (\%) \textbf{$\downarrow$} & $\frac{\left | \mathrm{AUC} - 0.5 \right |}{0.5}$ \textbf{$\downarrow$} & \begin{tabular}[c]{@{}l@{}}TPR\\ @$5\%$FPR (\%) \textbf{$\downarrow$}\end{tabular} \\
    \midrule
    \multirow{3}{*}{Pokemon} & LoRA  & \textbf{0.20$\pm$0.04} & \textbf{0.003$\pm$0.00} & 82.27$\pm$4.38 & 0.73$\pm$0.09 & 4.44$\pm$1.45 \\
    & MP-LoRA & 2.10$\pm$0.51 & 0.121$\pm$0.00 & \textbf{51.67$\pm$2.73} & \textbf{0.07$\pm$0.04} & \textbf{2.23$\pm$1.27} \\
    & SMP-LoRA &  0.32$\pm$0.07 & 0.004$\pm$0.00 & 51.97$\pm$1.20 & 0.14$\pm$0.02 & 4.45$\pm$2.12 \\
    \midrule
    \multirow{3}{*}{CelebA$\_$Small} & LoRA & N/A & 0.05$\pm$0.00 & 91.53$\pm$2.27 & 0.94$\pm$0.02 & 81.33$\pm$9.68 \\
    & MP-LoRA & N/A & 0.30$\pm$0.04 & \textbf{55.67$\pm$2.91} & \textbf{0.12$\pm$0.05} & \textbf{8.00$\pm$3.46} \\
    & SMP-LoRA & N/A & \textbf{0.03$\pm$0.01} & 56.00$\pm$2.52 & 0.24$\pm$0.08 & 17.33$\pm$5.21 \\
    \midrule
    \multirow{3}{*}{CelebA$\_$Large} & LoRA & \textbf{0.52$\pm$0.01} & 0.06$\pm$0.00 & 87.83$\pm$0.17 & 0.87$\pm$0.02 & 66.83$\pm$6.00 \\
    & MP-LoRA & 2.34$\pm$0.71 & 0.30$\pm$0.05 & 53.58$\pm$1.52 & \textbf{0.07$\pm$0.04} & 2.50$\pm$0.87 \\
    & SMP-LoRA & 0.60$\pm$0.04 & \textbf{0.05$\pm$0.00} & \textbf{48.83$\pm$1.17} & 0.19$\pm$0.05 & \textbf{1.67$\pm$1.01} \\
    \midrule
    \multirow{3}{*}{CelebA$\_$Gender} & LoRA & N/A & \textbf{0.06$\pm$0.00} & 84.63$\pm$1.67 & 0.79$\pm$0.06 & 36.67$\pm$5.83 \\
    & MP-LoRA & N/A & 0.32$\pm$0.04 & 55.43$\pm$2.07 & \textbf{0.14$\pm$0.06} & 5.83$\pm$1.67 \\
    & SMP-LoRA & N/A & \textbf{0.06$\pm$0.00} & \textbf{54.20$\pm$0.40} & 0.15$\pm$0.05 & \textbf{4.17$\pm$3.00} \\
    \midrule
    \multirow{3}{*}{CelebA$\_$Varying} & LoRA & N/A & 0.06$\pm$0.00 & 87.30$\pm$0.92 & 0.83$\pm$0.04 & 47.10$\pm$23.37  \\
    & MP-LoRA & N/A & 0.26$\pm$0.06 & 55.73$\pm$0.75 & \textbf{0.07$\pm$0.04} & \textbf{3.73$\pm$1.07}  \\
    & SMP-LoRA & N/A & \textbf{0.04$\pm$0.01} & \textbf{53.97$\pm$1.82} & 0.15$\pm$0.01 & 4.57$\pm$3.27  \\
    \bottomrule
  \end{tabular}
\end{table*}

Subsequently, we further corroborate our theoretical analyses with the following empirical evidence.
Compared to MP-LoRA's insignificant correlation, SMP-LoRA demonstrates a strong positive correlation between the Hessian norm and the gradient norm, evidenced by the PCC of 0.761 and the p-value less than 0.001 in the lower panel of Figure~\ref{fig:fig1b_GradientHessian}.
This indicates that the Hessian norm, which represents the local smoothness, is upper bounded by the gradient norm, resulting in lower mean (0.105) and standard deviation (0.253) than MP-LoRA.
Consequently, the constrained local smoothness mitigates the issue of unstable optimization and enables the SMP-LoRA to converge to a more optimal solution, as demonstrated by the progressively decreasing training loss shown in the blue dash-dot line of Figure~\ref{fig:fig1a_TrainingLoss} and the superior performance on both FID and ASR metrics illustrated by the blue pentagon marker in Figure~\ref{fig:fig1d_Performance}.

Notably, SMP-LoRA also exhibits lower mean and standard deviation of the gradient norm compared to MP-LoRA. We provide further empirical analysis in Appendix~\ref{app:GradScale}, showing that SMP-LoRA can implicitly rescale the gradient during adaptation by introducing the factors $ \frac{1}{1 - \lambda G + \delta} $ and $\frac{\cL_{\adarm}}{\left ( 1 - \lambda G + \delta \right )^2}$, thereby leading to more stable gradient and controlled gradient scale compared to MP-LoRA.

\section{Experiments}
\label{sec4_Exp}
In this section, we first evaluate the performance of LoRA, MP-LoRA, and SMP-LoRA in terms of image generation capability and effectiveness in defending against MI attacks.
Then, we conduct ablation studies on the important hyperparameters and further extend our membership-privacy-preserving method to full fine-tuning and DreamBooth~\citep{Ruiz2023dreambooth} methods.
Subsequently, we compare SMP-LoRA with traditional techniques for stabilizing the gradient and evaluate the effectiveness of SMP-LoRA in defending against MI attacks in different settings.

\begin{table*}[t]
  \caption{Performance of LoRA and SMP-LoRA across three larger datasets, as measured by FID, KID, AUC, and TPR.}
  \label{tab:tab2_Performance}
  \small
  \centering
  \begin{tabular}{llccccc}
    \toprule
    Dataset & Method & FID \textbf{$\downarrow$} & KID \textbf{$\downarrow$} & ASR (\%) \textbf{$\downarrow$} & $\frac{\left | \mathrm{AUC} - 0.5 \right |}{0.5}$ \textbf{$\downarrow$} & \begin{tabular}[c]{@{}l@{}}TPR\\ @$5\%$FPR (\%) \textbf{$\downarrow$}\end{tabular} \\
    \midrule
    \multirow{2}{*}{CelebA$\_$Large$\_$5X} & LoRA  & \textbf{0.53} & \textbf{0.051} & 92.80 & 0.94 & 85.80 \\
    & SMP-LoRA &  0.59 & 0.062 & \textbf{51.85} & \textbf{0.15} & \textbf{1.70} \\
    \midrule
    \multirow{2}{*}{AFHQ$\_$Large$\_$5X} & LoRA & \textbf{0.39} & \textbf{0.025} & 88.20 & 0.86 & 85.00 \\
    & SMP-LoRA & 0.51 & 0.041 & \textbf{56.00} & \textbf{0.26} & \textbf{12.00} \\
    \midrule
    \multirow{2}{*}{MS-COCO$\_$Large$\_$5X} & LoRA & \textbf{0.37} & \textbf{0.014} & 80.40 & 0.78 & 68.50 \\
    & SMP-LoRA & 0.72 & 0.023 & \textbf{46.10} & \textbf{0.14} & \textbf{4.1} \\
    \bottomrule
  \end{tabular}
\end{table*}

\paragraph{Dataset.}
In our experiment, we utilized four datasets: Pokemon~\citep{Pinkney2022pokemon}, CelebA~\citep{Liu2015faceattributes}, AFHQ~\citep{choi2020afhq}, and MS-COCO~\citep{lin2014mscoco}. We created several subsets from CelebA, including CelebA$\_$Small and CelebA$\_$Large, both balanced with equal image contribution per individual, as well as CelebA$\_$Gender and CelebA$\_$Varying, which are imbalanced with a $7:3$ gender ratio and varied image contributions per individual, respectively. Additiaonlly, we constructed CelebA$\_$Large$\_$5X, which is five times larger than the CelebA$\_$Large along with comparably sized AFHQ$\_$Large$\_$5X and MS-COCO$\_$Large$\_$5X. For more details, please refer to Appendix~\ref{app:ExpSetup}.

\paragraph{Model hyperparameters.} We utilized the official pre-trained Stable Diffusion v1.5~\citep{Compvis2024stable} for LDMs to build the LoRA module, with specific model hyperparameters detailed in Table~\ref{tab:tab5_Hyperparameter} in Appendix~\ref{app:ExpSetup}. We employed a 3-layer MLP as the proxy attack model $h_\omega$ and a structurally similar new attack model $h^\prime$ to evaluate the effectiveness of adapted LDMs in defending against MI attacks.

\paragraph{Evaluation metrics.} We employed Attack Success Rate (ASR)~\citep{Choquette2021label}, Area Under the ROC Curve (AUC), and True Positive Rate (TPR) to evaluate the effectiveness of MP-LoRA and SMP-LoRA in defending against MI attacks.
Lower values for ASR and TPR indicate a more effective defense against MI attacks. 
Consistent with many prior studies~\citep{Chen2022relaxloss,Ye2022enhanced,Duan2023diffusion,Dubinski2024towards}, we default that the TPR measures the capability of the attack model to identify samples as members correctly.
Consequently, an AUC value closer to 0.5, i.e., a smaller value of $\frac{\left | \mathrm{AUC} - 0.5 \right |}{0.5}$, represents a stronger defense capability, as MI attacks involve determining both membership and non-membership.
For assessing the image generation capability of the adapted LDMs, we utilized the Fr\'{e}chet Inception Distance (FID)~\citep{Heusel2017gans} and the Kernel Inception Distance (KID)~\citep{Binkowski2018demystifying}, with lower values denoting better image quality. Please refer to Appendix~\ref{app:ExpSetup} for detailed explanations.

\subsection{Effectiveness of SMP-LoRA in Defending Against MI Attacks}
\label{sec4.1_Performance}
In Table~\ref{tab:tab1_Performance}, we report the performance of LoRA, MP-LoRA, and SMP-LoRA, across the Pokemon, CelebA$\_$Small, CelebA$\_$Large, CelebA$\_$Gender, CelebA$\_$Varying datasets.
Compared to LoRA, MP-LoRA displays considerably higher FID and KID scores, indicating lower quality of generated images.
Meanwhile, MP-LoRA exhibits near-random levels of ASR and AUC, along with low TPR values at 5$\%$ TPR, showcasing its effectiveness in defending against MI attacks.
Notably, the FID and KID scores for SMP-LoRA closely align with those of LoRA, suggesting that SMP-LoRA only makes a minor sacrifice to the quality of generated images. 
Also, SMP-LoRA achieves near-random levels of ASR and AUC, and low TPR values at 5$\%$ FPR.
These results demonstrate that compared to MP-LoRA, SMP-LoRA effectively preserves membership privacy against MI attacks without significantly compromising image generation capability.

In Table~\ref{tab:tab2_Performance}, we present the performance of LoRA and SMP-LoRA on the CelebA$\_$Large$\_$5X, AFHQ$\_$Large$\_$5X, and MS-COCO$\_$Large$\_$5X datasets. SMP-LoRA remains effective on larger datasets, consistently defending against MI attacks and generating high-quality images.

\subsection{Ablation Study}
\label{sec4.2_AblationStudy}
This subsection presents ablation studies on the important hyperparameters: the coefficient $\lambda$, the learning rate $\eta_{2}$, and the LoRA's rank $r$.
We also extended the application of SMP-LoRA to the full fine-tuning and DreamBooth~\citep{Ruiz2023dreambooth} method to assess the generalizability of our membership-privacy-preserving method.
Additionally, we compared the performance of SMP-LoRA with gradient clipping and normalization techniques.
Furthermore, we evaluated the effectiveness of SMP-LoRA in preserving membership privacy under the black-box MI attacks~\citep{Wu2022membership} and the white-box gradient-based MI attacks~\citep{Pang2023white}.
Due to space constraints, only the key conclusions are presented in the main paper, with all tables and detailed analyses located in Appendix~\ref{app:detailed_as}.

\paragraph{Coefficient $\lambda$.}
Table~\ref{tab:tab6_aba_lambda} (Appendix~\ref{app:detailed_as}) presents the performance of SMP-LoRA with different coefficient $\lambda \in \{1.00, 0.50, 0.10, 0.05, 0.01\}$ across the Pokemon, CelebA$\_$Small, and CelebA$\_$Large datasets. 
As $\lambda$ decreases from 1.00 to 0.01, the FID and KID scores gradually decrease, while ASR increases and AUC deviates further from 0.5, suggesting that a lower $\lambda$ shifts the focus more towards minimizing adaptation loss rather than protecting membership privacy.

Figure~\ref{fig:fig3} in Appendix~\ref{app:detailed_as} shows the ROC curves for SMP-LoRA with these $\lambda$ values across all three datasets.
SMP-LoRA effectively defends against MI attacks under both strict False Positive Rate (FPR) constraints and more lenient error tolerance conditions.


\paragraph{Learning rate $\eta_{2}$.}
Table~\ref{tab:tab7_LR} in Appendix~\ref{app:detailed_as} displays the performance of SMP-LoRA with different learning rates $\eta_{2} \in \{1e-4, 1e-5, 1e-6\}$ on the Pokemon dataset.
SMP-LoRA consistently preserves membership privacy across all tested learning rates.

\paragraph{LoRA's rank $r$.}
Table~\ref{tab:tab8_Rank} in Appendix~\ref{app:detailed_as} shows the performance of SMP-LoRA with different rank $r \in \{128, 64, 32, 16, 8\}$ on the Pokemon dataset.
The performance of SMP-LoRA is not significantly affected by the LoRA's rank $r$.

\paragraph{Extending to the full fine-tuning and DreamBooth Methods.}
In Table~\ref{tab:tab9_finetuning} (Appendix~\ref{app:detailed_as}), we present the performance of SMP-LoRA and its extension to the full fine-tuning and DreamBooth~\citep{Ruiz2023dreambooth} methods on the Pokemon dataset. Our membership-privacy-preserving method continues to effectively protect membership privacy when applied to these methods, highlighting its potential applicability across different adaptation methods. 

\paragraph{Comparing with gradient clipping and normalization techniques.}
In Table~\ref{tab:tab10_ClipNorm} (Appendix~\ref{app:detailed_as}), we report the performance of SMP-LoRA and MP-LoRA enhanced with gradient clipping and normalization techniques on the Pokemon dataset.
These traditional techniques for stabilizing the gradient cannot address the unstable optimization issue in MP-LoRA.

\paragraph{Defending against MI attacks in different settings.}
In Table~\ref{tab:tab11_BlackWhite} (Appendix~\ref{app:detailed_as}), we display the attack performance on LoRA and SMP-LoRA using the black-box MI attack~\citep{Wu2022membership} and the white-box gradient-based MI attack~\citep{Pang2023white}, which is currently the most potent MI attack targeting DMs. This renders further comparisons with weaker attacks unnecessary, such as gray-box MI attacks~\citep{Duan2023diffusion,Kong2023efficient,Fu2023probabilistic}.
Compared to LoRA, SMP-LoRA, specifically designed to defend against white-box loss-based MI attacks, consistently provides enhanced membership privacy protection when facing MI attacks in different settings.
Implementation details for the black-box and white-box gradient-based MI attacks are available in Appendix~\ref{app:BlackboxImple}.

\section{Conclusion}
In this paper, we proposed membership-privacy-preserving LoRA (MP-LoRA), a method based on low-rank adaptation (LoRA) for adapting latent diffusion models (LDMs), while mitigating the risk of privacy leakage.
We first highlighted the unstable issue in MP-LoRA. Directly minimizing the sum of the adaptation loss and MI gain can lead to unconstrained local smoothness, which results in unstable optimization.
To mitigate this issue, we further proposed a stable membership-privacy-preserving LoRA (SMP-LoRA) method, which constrains the local smoothness through the gradient norm to improve convergence.
Detailed theoretical analyses and comprehensive empirical results demonstrate that the SMP-LoRA can effectively preserve membership privacy against MI attacks and generate high-quality images.

\section*{Acknowledgments}
Feng Liu is supported by the Australian Research Council (ARC) with grant numbers DP230101540 and DE240101089, and the NSF\&CSIRO Responsible AI program with grant number 2303037.

Di Wang is supported in part by the funding BAS/1/1689-01-01, URF/1/4663-01-01,  REI/1/5232-01-01, REI/1/5332-01-01, and URF/1/5508-01-01 from KAUST, and funding from KAUST - Center of Excellence for Generative AI, under award number 5940.

\bibliography{Reference}

\begin{thebibliography}{55}
\providecommand{\natexlab}[1]{#1}

\bibitem[{Abadi et~al.(2016)Abadi, Chu, Goodfellow, McMahan, Mironov, Talwar,
  and Zhang}]{abadi2016deep}
Abadi, M.; Chu, A.; Goodfellow, I.; McMahan, H.~B.; Mironov, I.; Talwar, K.;
  and Zhang, L. 2016.
\newblock Deep learning with differential privacy.
\newblock In \emph{Proceedings of the 2016 ACM SIGSAC conference on computer
  and communications security}, 308--318.

\bibitem[{Alvim et~al.(2017)Alvim, Chatzikokolakis, Kawamoto, and
  Palamidessi}]{Alvim2017information}
Alvim, M.~S.; Chatzikokolakis, K.; Kawamoto, Y.; and Palamidessi, C. 2017.
\newblock Information leakage games.
\newblock In \emph{Decision and Game Theory for Security: 8th International
  Conference, GameSec 2017, Vienna, Austria, October 23-25, 2017, Proceedings},
  437--457. Springer.

\bibitem[{Bi{\'n}kowski et~al.(2018)Bi{\'n}kowski, Sutherland, Arbel, and
  Gretton}]{Binkowski2018demystifying}
Bi{\'n}kowski, M.; Sutherland, D.~J.; Arbel, M.; and Gretton, A. 2018.
\newblock Demystifying MMD GANs.
\newblock In \emph{International Conference on Learning Representations}.

\bibitem[{Bubeck et~al.(2015)}]{Bubeck2015convex}
Bubeck, S.; et~al. 2015.
\newblock Convex optimization: Algorithms and complexity.
\newblock \emph{Foundations and Trends{\textregistered} in Machine Learning},
  8(3-4): 231--357.

\bibitem[{Carlini et~al.(2023)Carlini, Hayes, Nasr, Jagielski, Sehwag, Tramer,
  Balle, Ippolito, and Wallace}]{Carlini2023extracting}
Carlini, N.; Hayes, J.; Nasr, M.; Jagielski, M.; Sehwag, V.; Tramer, F.; Balle,
  B.; Ippolito, D.; and Wallace, E. 2023.
\newblock Extracting training data from diffusion models.
\newblock In \emph{32nd USENIX Security Symposium (USENIX Security 23)},
  5253--5270.

\bibitem[{Chen, Yu, and Fritz(2022)}]{Chen2022relaxloss}
Chen, D.; Yu, N.; and Fritz, M. 2022.
\newblock RelaxLoss: Defending Membership Inference Attacks without Losing
  Utility.
\newblock In \emph{International Conference on Learning Representations}.

\bibitem[{Choi et~al.(2020)Choi, Uh, Yoo, and Ha}]{choi2020afhq}
Choi, Y.; Uh, Y.; Yoo, J.; and Ha, J.-W. 2020.
\newblock Stargan v2: Diverse image synthesis for multiple domains.
\newblock In \emph{Proceedings of the IEEE/CVF conference on computer vision
  and pattern recognition}, 8188--8197.

\bibitem[{Choquette-Choo et~al.(2021)Choquette-Choo, Tramer, Carlini, and
  Papernot}]{Choquette2021label}
Choquette-Choo, C.~A.; Tramer, F.; Carlini, N.; and Papernot, N. 2021.
\newblock Label-only membership inference attacks.
\newblock In \emph{International conference on machine learning}, 1964--1974.
  PMLR.

\bibitem[{CompVis(2022)}]{Compvis2024stable}
CompVis. 2022.
\newblock Stable Diffusion.
\newblock \url{https://github.com/CompVis/stable-diffusion}.
\newblock Accessed on January 16, 2024.

\bibitem[{Cubuk et~al.(2020)Cubuk, Zoph, Shlens, and Le}]{cubuk2020randaugment}
Cubuk, E.~D.; Zoph, B.; Shlens, J.; and Le, Q.~V. 2020.
\newblock Randaugment: Practical automated data augmentation with a reduced
  search space.
\newblock In \emph{Proceedings of the IEEE/CVF conference on computer vision
  and pattern recognition workshops}, 702--703.

\bibitem[{Cuenca and Paul(2023)}]{Pedro2023lorasd}
Cuenca, P.; and Paul, S. 2023.
\newblock Using LoRA for Efficient Stable Diffusion Fine-Tuning.
\newblock Accessed on January 16, 2024.

\bibitem[{DeVries and Taylor(2017)}]{devries2017improved}
DeVries, T.; and Taylor, G.~W. 2017.
\newblock Improved regularization of convolutional neural networks with cutout.
\newblock \emph{arXiv preprint arXiv:1708.04552}.

\bibitem[{Dhariwal and Nichol(2021)}]{Dhariwal2021diffusion}
Dhariwal, P.; and Nichol, A. 2021.
\newblock Diffusion models beat gans on image synthesis.
\newblock \emph{Advances in neural information processing systems}, 34:
  8780--8794.

\bibitem[{Dockhorn et~al.(2023)Dockhorn, Cao, Vahdat, and
  Kreis}]{dockhorn2022differentially}
Dockhorn, T.; Cao, T.; Vahdat, A.; and Kreis, K. 2023.
\newblock Differentially Private Diffusion Models.
\newblock \emph{Transactions on Machine Learning Research}.

\bibitem[{Duan et~al.(2023)Duan, Kong, Wang, Shi, and Xu}]{Duan2023diffusion}
Duan, J.; Kong, F.; Wang, S.; Shi, X.; and Xu, K. 2023.
\newblock Are diffusion models vulnerable to membership inference attacks?
\newblock In \emph{International Conference on Machine Learning}, 8717--8730.
  PMLR.

\bibitem[{Dubi{\'n}ski et~al.(2024)Dubi{\'n}ski, Kowalczuk, Pawlak, Rokita,
  Trzci{\'n}ski, and Morawiecki}]{Dubinski2024towards}
Dubi{\'n}ski, J.; Kowalczuk, A.; Pawlak, S.; Rokita, P.; Trzci{\'n}ski, T.; and
  Morawiecki, P. 2024.
\newblock Towards More Realistic Membership Inference Attacks on Large
  Diffusion Models.
\newblock In \emph{Proceedings of the IEEE/CVF Winter Conference on
  Applications of Computer Vision}, 4860--4869.

\bibitem[{Dwork(2008)}]{dwork2008differential}
Dwork, C. 2008.
\newblock Differential privacy: A survey of results.
\newblock In \emph{International conference on theory and applications of
  models of computation}, 1--19. Springer.

\bibitem[{Fu et~al.(2023)Fu, Wang, Gao, Liu, Li, and
  Jiang}]{Fu2023probabilistic}
Fu, W.; Wang, H.; Gao, C.; Liu, G.; Li, Y.; and Jiang, T. 2023.
\newblock A Probabilistic Fluctuation based Membership Inference Attack for
  Generative Models.
\newblock \emph{arXiv preprint arXiv:2308.12143}.

\bibitem[{Gal et~al.(2023)Gal, Alaluf, Atzmon, Patashnik, Bermano, Chechik, and
  Cohen-or}]{Gal2022image}
Gal, R.; Alaluf, Y.; Atzmon, Y.; Patashnik, O.; Bermano, A.~H.; Chechik, G.;
  and Cohen-or, D. 2023.
\newblock An Image is Worth One Word: Personalizing Text-to-Image Generation
  using Textual Inversion.
\newblock In \emph{The Eleventh International Conference on Learning
  Representations}.

\bibitem[{Ghalebikesabi et~al.(2023)Ghalebikesabi, Berrada, Gowal, Ktena,
  Stanforth, Hayes, De, Smith, Wiles, and
  Balle}]{ghalebikesabi2023differentially}
Ghalebikesabi, S.; Berrada, L.; Gowal, S.; Ktena, I.; Stanforth, R.; Hayes, J.;
  De, S.; Smith, S.~L.; Wiles, O.; and Balle, B. 2023.
\newblock Differentially private diffusion models generate useful synthetic
  images.
\newblock \emph{arXiv preprint arXiv:2302.13861}.

\bibitem[{Goodfellow et~al.(2014)Goodfellow, Pouget-Abadie, Mirza, Xu,
  Warde-Farley, Ozair, Courville, and Bengio}]{Goodfellow2014GAN}
Goodfellow, I.; Pouget-Abadie, J.; Mirza, M.; Xu, B.; Warde-Farley, D.; Ozair,
  S.; Courville, A.; and Bengio, Y. 2014.
\newblock Generative adversarial nets.
\newblock \emph{Advances in neural information processing systems}, 27.

\bibitem[{Heusel et~al.(2017)Heusel, Ramsauer, Unterthiner, Nessler, and
  Hochreiter}]{Heusel2017gans}
Heusel, M.; Ramsauer, H.; Unterthiner, T.; Nessler, B.; and Hochreiter, S.
  2017.
\newblock Gans trained by a two time-scale update rule converge to a local nash
  equilibrium.
\newblock \emph{Advances in neural information processing systems}, 30.

\bibitem[{Ho, Jain, and Abbeel(2020)}]{Ho2020denoising}
Ho, J.; Jain, A.; and Abbeel, P. 2020.
\newblock Denoising diffusion probabilistic models.
\newblock \emph{Advances in neural information processing systems}, 33:
  6840--6851.

\bibitem[{Hu et~al.(2022)Hu, Wallis, Allen-Zhu, Li, Wang, Wang, Chen
  et~al.}]{Hu2021lora}
Hu, E.~J.; Wallis, P.; Allen-Zhu, Z.; Li, Y.; Wang, S.; Wang, L.; Chen, W.;
  et~al. 2022.
\newblock LoRA: Low-Rank Adaptation of Large Language Models.
\newblock In \emph{International Conference on Learning Representations}.

\bibitem[{Hu and Pang(2023)}]{Hu2023membership}
Hu, H.; and Pang, J. 2023.
\newblock Membership inference of diffusion models.
\newblock \emph{arXiv preprint arXiv:2301.09956}.

\bibitem[{Huang et~al.(2023)Huang, Chan, Jiang, and
  Liu}]{Huang2023collaborative}
Huang, Z.; Chan, K.~C.; Jiang, Y.; and Liu, Z. 2023.
\newblock Collaborative diffusion for multi-modal face generation and editing.
\newblock In \emph{Proceedings of the IEEE/CVF Conference on Computer Vision
  and Pattern Recognition}, 6080--6090.

\bibitem[{Huggingface(2023)}]{Huggingface2023LoRA}
Huggingface. 2023.
\newblock Conceptual Guides LoRA.
\newblock \url{https://huggingface.co/docs/peft/conceptual_guides/lora}.
\newblock Accessed on January 16, 2024.

\bibitem[{Kazerouni et~al.(2022)Kazerouni, Aghdam, Heidari, Azad, Fayyaz,
  Hacihaliloglu, and Merhof}]{Kazerouni2022diffusion}
Kazerouni, A.; Aghdam, E.~K.; Heidari, M.; Azad, R.; Fayyaz, M.; Hacihaliloglu,
  I.; and Merhof, D. 2022.
\newblock Diffusion models for medical image analysis: A comprehensive survey.
\newblock \emph{arXiv preprint arXiv:2211.07804}.

\bibitem[{Kong et~al.(2024)Kong, Duan, Ma, Shen, Shi, Zhu, and
  Xu}]{Kong2023efficient}
Kong, F.; Duan, J.; Ma, R.; Shen, H.~T.; Shi, X.; Zhu, X.; and Xu, K. 2024.
\newblock An Efficient Membership Inference Attack for the Diffusion Model by
  Proximal Initialization.
\newblock In \emph{The Twelfth International Conference on Learning
  Representations}.

\bibitem[{Li et~al.(2022)Li, Li, Xiong, and Hoi}]{Li2022blip}
Li, J.; Li, D.; Xiong, C.; and Hoi, S. 2022.
\newblock Blip: Bootstrapping language-image pre-training for unified
  vision-language understanding and generation.
\newblock In \emph{International Conference on Machine Learning}, 12888--12900.
  PMLR.

\bibitem[{Lin et~al.(2014)Lin, Maire, Belongie, Hays, Perona, Ramanan,
  Doll{\'a}r, and Zitnick}]{lin2014mscoco}
Lin, T.-Y.; Maire, M.; Belongie, S.; Hays, J.; Perona, P.; Ramanan, D.;
  Doll{\'a}r, P.; and Zitnick, C.~L. 2014.
\newblock Microsoft coco: Common objects in context.
\newblock In \emph{Computer Vision--ECCV 2014: 13th European Conference,
  Zurich, Switzerland, September 6-12, 2014, Proceedings, Part V 13}, 740--755.
  Springer.

\bibitem[{Liu et~al.(2015)Liu, Luo, Wang, and Tang}]{Liu2015faceattributes}
Liu, Z.; Luo, P.; Wang, X.; and Tang, X. 2015.
\newblock Deep Learning Face Attributes in the Wild.
\newblock In \emph{Proceedings of International Conference on Computer Vision
  (ICCV)}.

\bibitem[{Lyu et~al.(2023)Lyu, Liu, Vinaroz, and Park}]{lyu2023differentially}
Lyu, S.; Liu, M.~F.; Vinaroz, M.; and Park, M. 2023.
\newblock Differentially private latent diffusion models.
\newblock \emph{arXiv preprint arXiv:2305.15759}.

\bibitem[{Manshaei et~al.(2013)Manshaei, Zhu, Alpcan, Bac{\c{s}}ar, and
  Hubaux}]{Manshaei2013game}
Manshaei, M.~H.; Zhu, Q.; Alpcan, T.; Bac{\c{s}}ar, T.; and Hubaux, J.-P. 2013.
\newblock Game theory meets network security and privacy.
\newblock \emph{ACM Computing Surveys (CSUR)}, 45(3): 1--39.

\bibitem[{Matsumoto, Miura, and Yanai(2023)}]{Matsumoto2023membership}
Matsumoto, T.; Miura, T.; and Yanai, N. 2023.
\newblock Membership inference attacks against diffusion models.
\newblock In \emph{2023 IEEE Security and Privacy Workshops (SPW)}, 77--83.
  IEEE.

\bibitem[{Meng et~al.(2022)Meng, He, Song, Song, Wu, Zhu, and
  Ermon}]{Meng2021sdedit}
Meng, C.; He, Y.; Song, Y.; Song, J.; Wu, J.; Zhu, J.-Y.; and Ermon, S. 2022.
\newblock SDEdit: Guided Image Synthesis and Editing with Stochastic
  Differential Equations.
\newblock In \emph{International Conference on Learning Representations}.

\bibitem[{Nasr, Shokri, and Houmansadr(2018)}]{Nasr2018machine}
Nasr, M.; Shokri, R.; and Houmansadr, A. 2018.
\newblock Machine learning with membership privacy using adversarial
  regularization.
\newblock In \emph{Proceedings of the 2018 ACM SIGSAC conference on computer
  and communications security}, 634--646.

\bibitem[{NovelAI(2022)}]{Hypernetwork}
NovelAI. 2022.
\newblock Hypernetwork.
\newblock
  \url{https://blog.novelai.net/novelai-improvements-on-stable-diffusion-e10d38db82ac}.
\newblock Accessed on May 12, 2024.

\bibitem[{Pang and Wang(2023)}]{Pang2023black}
Pang, Y.; and Wang, T. 2023.
\newblock Black-box Membership Inference Attacks against Fine-tuned Diffusion
  Models.
\newblock \emph{arXiv preprint arXiv:2312.08207}.

\bibitem[{Pang et~al.(2023)Pang, Wang, Kang, Huai, and Zhang}]{Pang2023white}
Pang, Y.; Wang, T.; Kang, X.; Huai, M.; and Zhang, Y. 2023.
\newblock White-box membership inference attacks against diffusion models.
\newblock \emph{arXiv preprint arXiv:2308.06405}.

\bibitem[{Pinkney(2022)}]{Pinkney2022pokemon}
Pinkney, J. N.~M. 2022.
\newblock Pokemon BLIP captions.
\newblock
  \url{https://huggingface.co/datasets/lambdalabs/pokemon-blip-captions/}.

\bibitem[{Podell et~al.(2024)Podell, English, Lacey, Blattmann, Dockhorn,
  M{\"u}ller, Penna, and Rombach}]{Podell2023sdxl}
Podell, D.; English, Z.; Lacey, K.; Blattmann, A.; Dockhorn, T.; M{\"u}ller,
  J.; Penna, J.; and Rombach, R. 2024.
\newblock SDXL: Improving Latent Diffusion Models for High-Resolution Image
  Synthesis.
\newblock In \emph{The Twelfth International Conference on Learning
  Representations}.

\bibitem[{Radford et~al.(2021)Radford, Kim, Hallacy, Ramesh, Goh, Agarwal,
  Sastry, Askell, Mishkin, Clark et~al.}]{Radford2021learning}
Radford, A.; Kim, J.~W.; Hallacy, C.; Ramesh, A.; Goh, G.; Agarwal, S.; Sastry,
  G.; Askell, A.; Mishkin, P.; Clark, J.; et~al. 2021.
\newblock Learning transferable visual models from natural language
  supervision.
\newblock In \emph{International conference on machine learning}, 8748--8763.
  PMLR.

\bibitem[{Rombach et~al.(2022)Rombach, Blattmann, Lorenz, Esser, and
  Ommer}]{Rombach2022high}
Rombach, R.; Blattmann, A.; Lorenz, D.; Esser, P.; and Ommer, B. 2022.
\newblock High-resolution image synthesis with latent diffusion models.
\newblock In \emph{Proceedings of the IEEE/CVF conference on computer vision
  and pattern recognition}, 10684--10695.

\bibitem[{Ruiz et~al.(2023)Ruiz, Li, Jampani, Pritch, Rubinstein, and
  Aberman}]{Ruiz2023dreambooth}
Ruiz, N.; Li, Y.; Jampani, V.; Pritch, Y.; Rubinstein, M.; and Aberman, K.
  2023.
\newblock Dreambooth: Fine tuning text-to-image diffusion models for
  subject-driven generation.
\newblock In \emph{Proceedings of the IEEE/CVF Conference on Computer Vision
  and Pattern Recognition}, 22500--22510.

\bibitem[{Shavlokhova et~al.(2023)Shavlokhova, Vollmer, Zouboulis, Vollmer,
  Wollborn, Lang, K{\"u}bler, Hartmann, Stoll, Roider
  et~al.}]{Shavlokhova2023finetuning}
Shavlokhova, V.; Vollmer, A.; Zouboulis, C.~C.; Vollmer, M.; Wollborn, J.;
  Lang, G.; K{\"u}bler, A.; Hartmann, S.; Stoll, C.; Roider, E.; et~al. 2023.
\newblock Finetuning of GLIDE stable diffusion model for AI-based
  text-conditional image synthesis of dermoscopic images.
\newblock \emph{Frontiers in Medicine}, 10.

\bibitem[{Shokri et~al.(2017)Shokri, Stronati, Song, and
  Shmatikov}]{Shokri2017membership}
Shokri, R.; Stronati, M.; Song, C.; and Shmatikov, V. 2017.
\newblock Membership inference attacks against machine learning models.
\newblock In \emph{2017 IEEE symposium on security and privacy (SP)}, 3--18.
  IEEE.

\bibitem[{Shokri et~al.(2012)Shokri, Theodorakopoulos, Troncoso, Hubaux, and
  Le~Boudec}]{Shokri2012protecting}
Shokri, R.; Theodorakopoulos, G.; Troncoso, C.; Hubaux, J.-P.; and Le~Boudec,
  J.-Y. 2012.
\newblock Protecting location privacy: optimal strategy against localization
  attacks.
\newblock In \emph{Proceedings of the 2012 ACM conference on Computer and
  communications security}, 617--627.

\bibitem[{Song et~al.(2021)Song, Sohl-Dickstein, Kingma, Kumar, Ermon, and
  Poole}]{Song2020score}
Song, Y.; Sohl-Dickstein, J.; Kingma, D.~P.; Kumar, A.; Ermon, S.; and Poole,
  B. 2021.
\newblock Score-Based Generative Modeling through Stochastic Differential
  Equations.
\newblock In \emph{International Conference on Learning Representations}.

\bibitem[{Wu et~al.(2022)Wu, Yu, Li, Backes, and Zhang}]{Wu2022membership}
Wu, Y.; Yu, N.; Li, Z.; Backes, M.; and Zhang, Y. 2022.
\newblock Membership inference attacks against text-to-image generation models.
\newblock \emph{arXiv preprint arXiv:2210.00968}.

\bibitem[{Xu et~al.(2024)Xu, Motamed, Vaddamanu, Wu, Haene, Bazin, and De~la
  Torre}]{Xu2024personalized}
Xu, J.; Motamed, S.; Vaddamanu, P.; Wu, C.~H.; Haene, C.; Bazin, J.-C.; and
  De~la Torre, F. 2024.
\newblock Personalized Face Inpainting with Diffusion Models by Parallel Visual
  Attention.
\newblock In \emph{Proceedings of the IEEE/CVF Winter Conference on
  Applications of Computer Vision}, 5432--5442.

\bibitem[{Ye et~al.(2022)Ye, Maddi, Murakonda, Bindschaedler, and
  Shokri}]{Ye2022enhanced}
Ye, J.; Maddi, A.; Murakonda, S.~K.; Bindschaedler, V.; and Shokri, R. 2022.
\newblock Enhanced membership inference attacks against machine learning
  models.
\newblock In \emph{Proceedings of the 2022 ACM SIGSAC Conference on Computer
  and Communications Security}, 3093--3106.

\bibitem[{Zhang et~al.(2019)Zhang, He, Sra, and Jadbabaie}]{Zhang2019gradient}
Zhang, J.; He, T.; Sra, S.; and Jadbabaie, A. 2019.
\newblock Why Gradient Clipping Accelerates Training: A Theoretical
  Justification for Adaptivity.
\newblock In \emph{International Conference on Learning Representations}.

\bibitem[{Zhang, Rao, and Agrawala(2023)}]{Zhang2023adding}
Zhang, L.; Rao, A.; and Agrawala, M. 2023.
\newblock Adding conditional control to text-to-image diffusion models.
\newblock In \emph{Proceedings of the IEEE/CVF International Conference on
  Computer Vision}, 3836--3847.

\bibitem[{Zhang et~al.(2024)Zhang, Yu, Wen, Backes, and
  Zhang}]{Zhang2024generated}
Zhang, M.; Yu, N.; Wen, R.; Backes, M.; and Zhang, Y. 2024.
\newblock Generated Distributions Are All You Need for Membership Inference
  Attacks Against Generative Models.
\newblock In \emph{Proceedings of the IEEE/CVF Winter Conference on
  Applications of Computer Vision}, 4839--4849.

\end{thebibliography}

\clearpage
\appendix
\section{Algorithm of MP-LoRA}
\label{app:algorithm2}
We provide the specific implementation of MP-LoRA as follows:
\begin{algorithm}[h]
\caption{Membership-Privacy-preserving LoRA}
\label{alg:algorithm2}
\textbf{Input}: Training dataset $\cD_\trrm$ for adaptation process, Auxiliary dataset $\cD_\auxrm = \cD^\mrm_\auxrm \cup \cD^\nmrm_\auxrm$, a pre-trained LDM $f_{\theta}$, a proxy attack model $h_{\omega}$ parameterized by $\omega$, learning rate $\eta_1$ and $\eta_2$ \\
\textbf{Output}: a MP-LoRA for LDMs
\begin{algorithmic}[1] 
\STATE Perform low-rank decomposition on $f_{\theta}$ to obtain $f_{\bar{\theta} + \bB\bA}$ ($\bB$ and $\bA$ are trainable LoRA modules)
\FOR{each epoch}
\FOR{each training iteration}
\STATE Sample batches $S^\mrm$ and $S^\nmrm$ from $\cD^\mrm_\auxrm$ and $\cD^\nmrm_\auxrm$
\STATE Calculate the MI gain $G^*$ on $S^\mrm \cup S^\nmrm$
\STATE Update the parameters $\omega \gets \omega + \eta_1 \cdot \nabla_\omega G^*$.
\STATE Sample a fresh batch from $\cD_\trrm$
\STATE Calculate the training loss $\cL^* = \cL_\plrm$
\STATE Update parameters $\bA \gets \bA - \eta_2 \cdot \nabla_\bA \cL^*$ and $\bB \gets \bB - \eta_2 \cdot \nabla_\bB \cL^*$, respectively
\ENDFOR
\ENDFOR
\end{algorithmic}
\end{algorithm}

\begin{figure*}[t]
    \centering
    \begin{minipage}{0.48\textwidth}
        \begin{subfigure}[t]{\linewidth}
            \centering
            \includegraphics[width=0.9\linewidth,keepaspectratio]{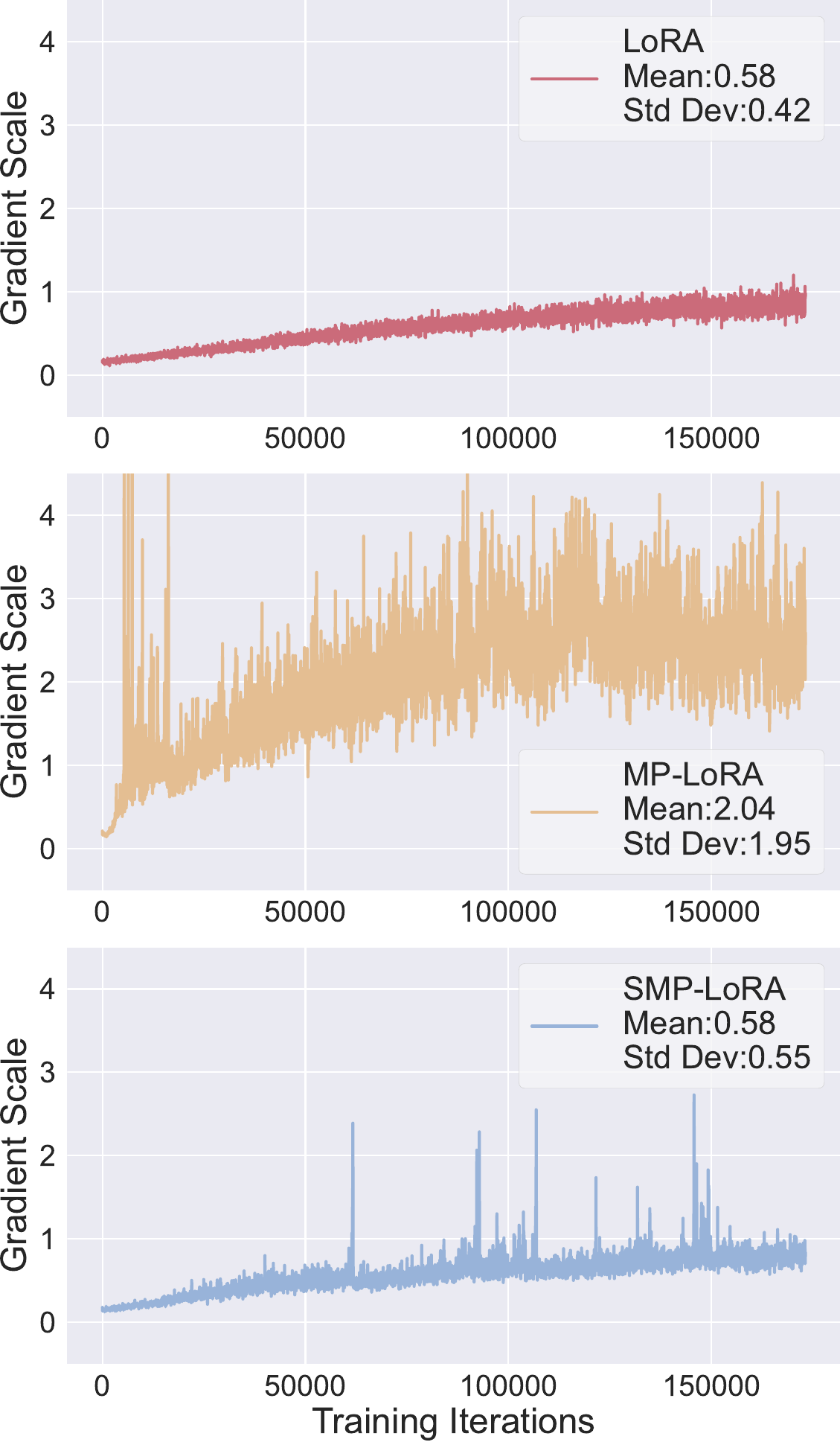}
            \caption{Gradient scale obtained by training loss.}
            \label{fig:fig2a_GradScale}
        \end{subfigure}
    \end{minipage}%
    \begin{minipage}{0.48\textwidth}
        \begin{subfigure}[t]{\linewidth}
            \centering
            \includegraphics[width=0.8\linewidth,height=\linewidth,keepaspectratio]{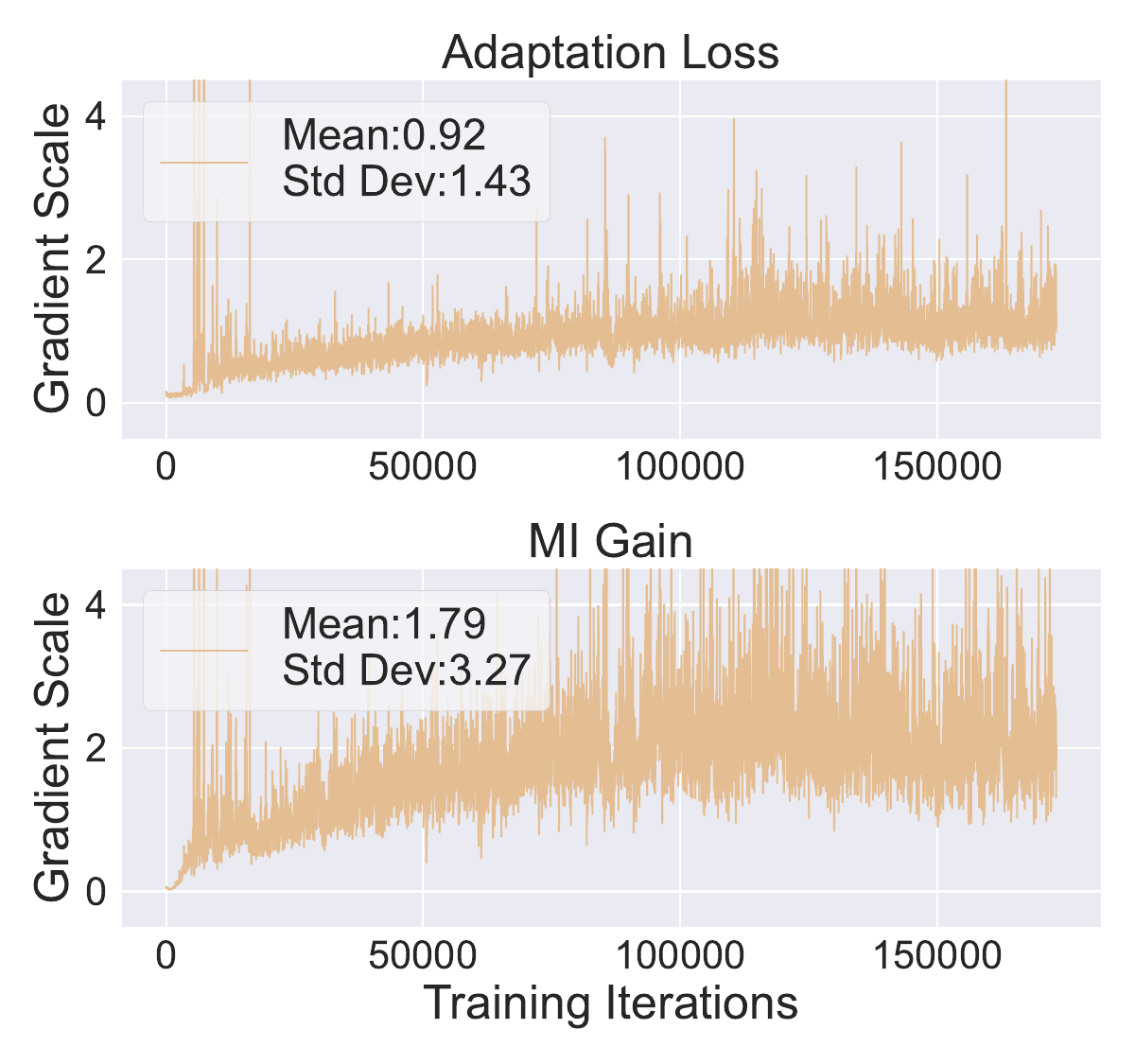}
            \caption{Gradient scale of MP-LoRA.}
            \label{fig:fig2b_PLoRAGradScale}
        \end{subfigure}\\ 
        \begin{subfigure}[b]{\linewidth}
            \centering
         \includegraphics[width=0.8\linewidth,height=\linewidth,keepaspectratio]{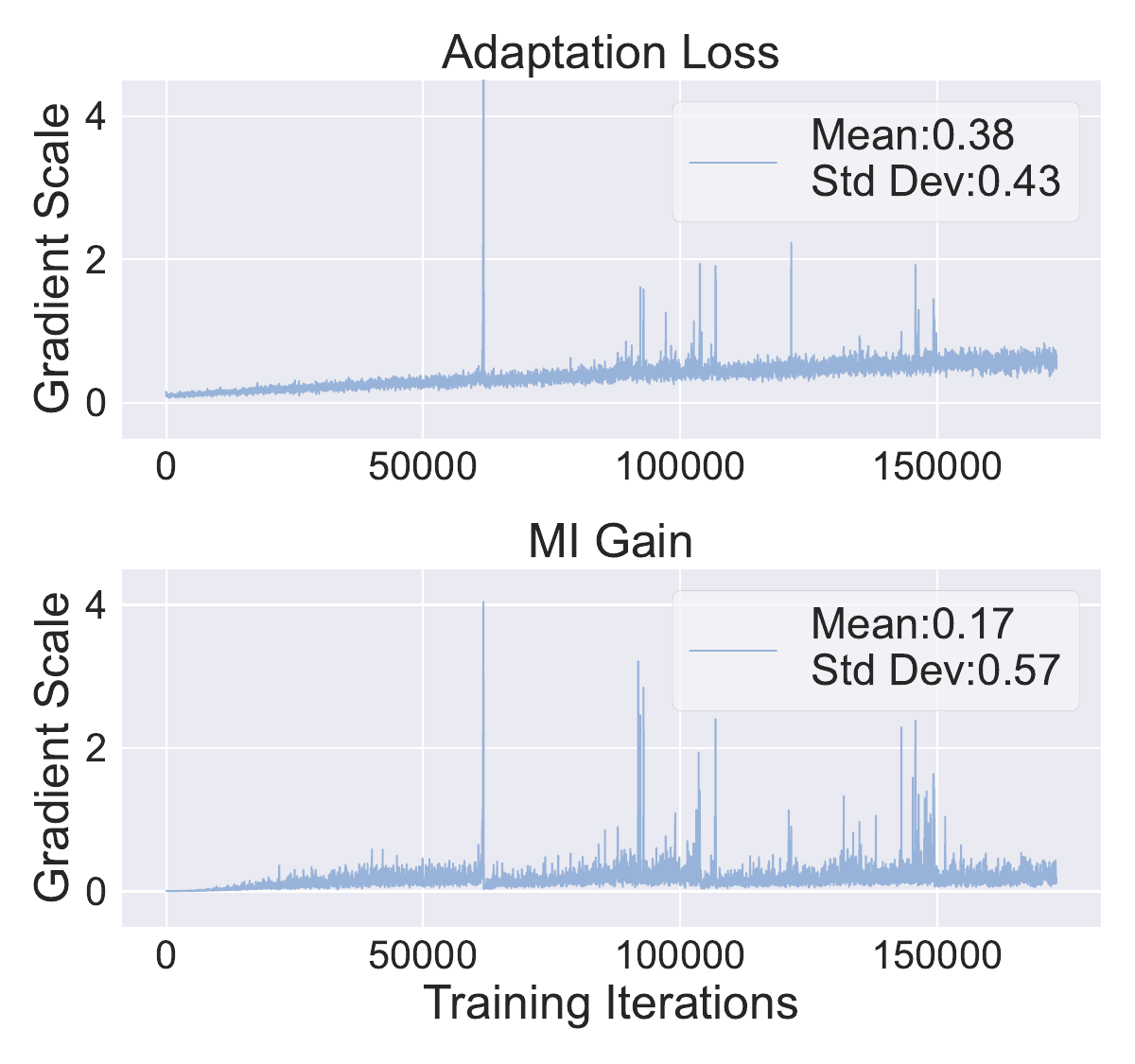} 
            \caption{Gradient scale of SMP-LoRA.}
            \label{fig:fig2c_SPLoRAGradScale}
        \end{subfigure}
    \end{minipage}
    \caption{Figure~\ref{fig:fig2a_GradScale} shows the mean and standard deviation of gradient scales obtained by training loss throughout the training iterations of LoRA, MP-LoRA, and SMP-LoRA on the Pokemon dataset. Figure~\ref{fig:fig2b_PLoRAGradScale} and~\ref{fig:fig2c_SPLoRAGradScale} display the gradient scales obtained by the adaptation loss and MI gain respectively during MP-LoRA and SMP-LoRA. Compared with MP-LoRA, SMP-LoRA has more stable gradient and controlled gradient scale.}
    \label{fig:fig2} 
\end{figure*}

\section{Detailed Derivation for Establishing the Relaxed Smoothness Condition}
\label{app:SecondDerivative}
We provide the detailed derivation for establishing the relaxed smoothness condition for MP-LoRA and SMP-LoRA, respectively.

For MP-LoRA, we derive the second derivatives of the training loss $\cL_\plrm$ in Equation~\eqref{eq6_PLoRALoss} to establish the relaxed smoothness condition.
\begin{align}
\label{eq:PLoRA_SecDer}
\frac{\partial^2 \cL_\plrm}{\partial \bB\bA^2} & = \frac{\partial}{\partial \bB\bA}\left( \frac{\partial \cL_\plrm}{\partial \bB\bA} \right) \nonumber \\ & = \frac{\partial }{\partial \bB\bA}\left (\frac{\partial \cL_\adarm}{\partial \bB\bA} + \lambda \frac{\partial G}{\partial \bB\bA} \right) \nonumber \\ & = \frac{\partial^2 \cL_\adarm}{\partial \bB\bA^2} + \lambda \frac{\partial^2 G}{\partial \bB\bA^2},
\end{align}
leading to:
\begin{align}
\left\|\frac{\partial^2 \mathcal{L}_{\mathrm{PL}}}{\partial \mathbf{B}\mathbf{A}^2}\right\|
\le \left\|\frac{\partial^2 \cL_\adarm}{\partial \bB\bA^2}\right\| + \left\|\lambda \frac{\partial^2 G}{\partial \bB\bA^2}\right\| + 0\cdot\left\|\frac{\partial \mathcal{L}_{\mathrm{PL}}}{\partial \mathbf{B}\mathbf{A}}\right\|, \nonumber
\end{align}
resulting in:
\begin{align}
& \|\frac{\partial^2 \mathcal{L}_{\mathrm{PL}}}{\partial \mathbf{B}\mathbf{A}^2}\| \le L_{0} + L_{1} \|\frac{\partial \mathcal{L}_{\mathrm{PL}}}{\partial \mathbf{B}\mathbf{A}}\|, \nonumber \\
& \mathrm{where}~~L_{0} = \|\frac{\partial^2 \mathcal{L}_{\mathrm{ada}}}{\partial \mathbf{B}\mathbf{A}^2}\| + \lambda \|\frac{\partial^2 G}{\partial \mathbf{B}\mathbf{A}^2}\|,~L_{1} = 0, \nonumber
\end{align}
in which $\cL_\adarm$ represents the adaptation loss and $G$ represents the MI gain.

For SMP-LoRA, we derive the second derivative of the training loss $\cL_\splrm$ in Equation~\eqref{eq10_SPLoRALoss} to establish the relaxed smoothness condition.
{\scriptsize
\begin{align}
\label{eq:SPLoRA_SecDer}
& \frac{\partial^2 \cL_\splrm}{\partial \bB\bA^2}  = \frac{\partial}{\partial \bB\bA}\left( \frac{\partial \cL_\splrm}{\partial \bB\bA}\right) \nonumber \\ & = \frac{\partial}{\partial \bB\bA} \left ( \frac{\left ( 1 - \lambda G + \delta \right )\frac{\partial \cL_{\adarm}}{\partial \bB\bA} - \cL_{\adarm} \frac{\partial \left ( 1 - \lambda G + \delta \right )}{\partial \bB\bA} }{\left ( 1 - \lambda G + \delta \right )^2}  \right ) \nonumber \\ 
& = \frac{\partial}{\partial \bB\bA} \cdot \left[ \frac{1}{1-\lambda G + \delta} \cdot \frac{\partial \cL_\adarm}{\partial \bB\bA} + \frac{\cL_\adarm}{(1-\lambda G + \delta)^2} \cdot \lambda\frac{\partial G}{\partial \bB\bA} \right] \nonumber \\ 
& = \frac{\partial}{\partial \bB\bA} \left( \frac{1}{1-\lambda G + \delta} \right) \cdot \frac{\partial \cL_\adarm}{\partial \bB\bA} + \frac{1}{1-\lambda G + \delta} \cdot \frac{\partial^2 \cL_\adarm}{\partial \bB\bA^2} \nonumber \\ \nonumber
& \quad \quad + \frac{\partial}{\partial \bB\bA} \left[ \frac{\cL_\adarm}{(1-\lambda G + \delta)^2} \right] \cdot \lambda\frac{\partial G}{\partial \bB\bA} + \frac{\cL_\adarm}{(1-\lambda G + \delta)^2} \cdot \lambda \frac{\partial^2 G}{\partial \bB\bA^2} \nonumber \\ 
& = \frac{\lambda \frac{\partial G}{\partial \bB\bA} \cdot \frac{\partial \cL_\adarm}{\partial \bB\bA}}{(1-\lambda G + \delta)^2} + \frac{\frac{\partial^2 \cL_\adarm}{\partial \bB\bA^2}}{1-\lambda G + \delta} + \frac{\lambda \cL_\adarm \cdot \frac{\partial^2 G}{\partial \bB\bA^2}}{(1-\lambda G + \delta)^2} \nonumber \\ 
& \quad \quad + \left[ \frac{\frac{\partial \cL_\adarm}{\partial \bB\bA}}{(1-\lambda G + \delta)^2} + \frac{2\lambda \cL_\adarm \cdot \frac{\partial G}{\partial \bB\bA}}{(1-\lambda G + \delta)^3}  \right] \cdot \lambda \frac{\partial G}{\partial \bB\bA} \nonumber \\ 
& = \frac{2\lambda \frac{\partial G}{\partial \bB\bA} \cdot \frac{\partial \cL_\adarm}{\partial \bB\bA}}{(1-\lambda G + \delta)^2} + \frac{2\lambda \cL_\adarm \cdot (\frac{\partial G}{\partial \bB\bA})^2}{(1-\lambda G + \delta)^3} + \frac{\frac{\partial^2 \cL_\adarm}{\partial \bB\bA^2}}{1-\lambda G + \delta} \nonumber \\ 
& \quad \quad + \frac{\lambda \cL_\adarm \cdot \frac{\partial G^2}{\partial \bB\bA^2}}{(1-\lambda G + \delta)^2} \nonumber \\ 
& = \frac{2\lambda \frac{\partial G}{\partial \bB\bA}}{1-\lambda G + \delta} \cdot \left[ \frac{1}{1-\lambda G + \delta} \cdot \frac{\partial \cL_\adarm}{\partial \bB\bA} + \frac{\cL_\adarm}{(1-\lambda G + \delta)^2} \cdot \lambda\frac{\partial G}{\partial \bB\bA} \right] \nonumber \\ 
& \quad \quad + \frac{\frac{\partial^2 \cL_\adarm}{\partial \bB\bA^2}}{1-\lambda G + \delta} + \frac{\lambda \cL_\adarm \cdot \frac{\partial G^2}{\partial \bB\bA^2}}{(1-\lambda G + \delta)^2} \nonumber \\ 
& = \frac{2\nu}{1-\lambda G + \delta} \cdot (\mu^\prime + \nu^\prime) + \frac{\frac{\partial^2 \cL_\adarm}{\partial \bB\bA^2}}{1-\lambda G + \delta} + \frac{\lambda \cL_\adarm \cdot \frac{\partial G^2}{\partial \bB\bA^2}}{(1-\lambda G + \delta)^2} \nonumber \\ 
& = \frac{2\nu}{1-\lambda G + \delta} \cdot \frac{\partial \cL_\splrm}{\partial \bB\bA} + \frac{1}{1-\lambda G + \delta}\cdot \frac{\partial^2 \cL_\adarm}{\partial \bB\bA^2} \nonumber \\
& \quad \quad + \frac{\lambda \cL_\adarm }{(1-\lambda G + \delta)^2}\cdot \frac{\partial G^2}{\partial \bB\bA^2}, \nonumber \\ 
& \mathrm{where}~~\mu = \frac{\partial \cL_\adarm}{\partial \bB\bA}, ~\nu = \lambda \frac{\partial G}{\partial \bB\bA}, \nonumber \\ 
& \quad \quad \quad \ ~\mu^\prime = \frac{1}{1 - \lambda G + \delta} \cdot \mu, ~\nu^\prime = \frac{\cL_{\adarm}}{\left ( 1 - \lambda G + \delta \right )^2} \cdot \nu,
\end{align}
}
leading to:
{\scriptsize
\begin{align}
\|\frac{\partial^2 \mathcal{L}_{\mathrm{SPL}}}{\partial \mathbf{B}\mathbf{A}^2}\| &\le \frac{1}{1-\lambda G + \delta} \cdot \|\frac{\partial^2 \cL_\adarm}{\partial \bB\bA^2}\| + \frac{\lambda \cL_\adarm}{(1-\lambda G + \delta)^2}\cdot \|\frac{\partial G^2}{\partial \bB\bA^2}\| \nonumber \\ 
&+ \frac{2\|\nu\|}{1-\lambda G + \delta}\cdot\|\frac{\partial \mathcal{L}_{\mathrm{SPL}}}{\partial \mathbf{B}\mathbf{A}}\|, \nonumber
\end{align}
}
resulting in:
{\scriptsize
\begin{align}
& \|\frac{\partial^2 \mathcal{L}_{\mathrm{SPL}}}{\partial \mathbf{B}\mathbf{A}^2}\| \le L^\prime_{0} + L^\prime_{1}\|\frac{\partial \mathcal{L}_{\mathrm{SPL}}}{\partial \mathbf{B}\mathbf{A}}\|, \nonumber \\
& \mathrm{where}~~L^\prime_{0} = \frac{1}{1-\lambda G + \delta} \cdot \|\frac{\partial^2 \cL_\adarm}{\partial \bB\bA^2}\| + \frac{\lambda \cL_\adarm}{(1-\lambda G + \delta)^2}\cdot \|\frac{\partial G^2}{\partial \bB\bA^2}\|, \nonumber \\
& L^\prime_{1} = \frac{2\|\nu\|}{1-\lambda G + \delta}. \nonumber
\end{align}
}

\section{Detailed Calculation of the Gradient Norm and the Hessian Norm}
\label{app:CalGradHess}
In this section, we detail the calculation of the gradient norm and the Hessian norm achieved by the training loss for both MP-LoRA and SMP-LoRA.

For MP-LoRA, the training loss gradient w.r.t. parameters $\bB$ and $\bA$ are calculated as follows:
\begin{equation}
\label{eq:PLoRA_Grad}
 \frac{\partial \cL_\plrm}{\partial \bB} =  (\mu + \nu)\bA^\top ,  
~\frac{\partial \cL_\plrm}{\partial \bA} = \bB^\top (\mu + \nu) ,   
\end{equation}
where $\mu$ and $\nu$ are calculated in Equation~\ref{eq:SPLoRA_SecDer}. The gradient norm for the training loss during MP-LoRA, as shown in Figure~\ref{fig:fig1b_GradientHessian}, is calculated as the square root of the sum of the squared gradients across all parameters in $\bB$ and $\bA$, which is equivalent to computing the overall L2 norm for all gradients, i.e., $\sqrt{\| \frac{\partial \cL_\plrm}{\partial \bB} \|_2^2 + \| \frac{\partial \cL_\plrm}{\partial \bA}\|_2^2}$.

For SMP-LoRA, the training loss gradient is calculated as follows:
\begin{align}
\label{eq:SPLoRA_Grad}
\frac{\partial \cL_\splrm}{\partial \bB} = (\mu^\prime + \nu^\prime)\bA^\top , 
\frac{\partial \cL_\splrm}{\partial \bA} = \bB^\top (\mu^\prime + \nu^\prime), 
\end{align}
where $\mu^\prime$ and $\nu^\prime$ are calculated in Equation~\ref{eq:SPLoRA_SecDer}. The gradient norm for the training loss during SMP-LoRA, as shown in Figure~\ref{fig:fig1b_GradientHessian}, is calculated as $\sqrt{\| \frac{\partial \cL_\splrm}{\partial \bB} \|_2^2 + \| \frac{\partial \cL_\splrm}{\partial \bA}\|_2^2}$

For the norm of the Hessian matrix (Hessian norm), calculating the actual norm is impractical due to the significant computational cost involved.
Consequently, we use power iteration to iteratively approximate the largest eigenvalue of the Hessian matrix, which is a reliable indicator of the matrix norm, to estimate the Hessian norm.
During the adaptation process via MP-LoRA and SMP-LoRA, we estimate the Hessian norm every 100 steps, with the power iteration configured to run for 50 iterations and a tolerance of 1e-6.

\begin{table}[t]
  \caption{This table displays the maximum, minimum, mean, and standard deviation of the dynamically changed rescaling factors introduced by SMP-LoRA during the adaptation process on the Pokemon dataset.}
  \label{tab:tab3_ResFactor}
  \centering
  \begin{tabular}{ccccc}
    \toprule
    Rescaling Factors & Max & Min & Mean & Std Dev \\
    \midrule
    $ \frac{1}{1 - \lambda G + \delta} $    & 0.999 & 0.018 & 0.946 & 0.087 \\
    \midrule
    $\frac{\cL_{\adarm}}{\left ( 1 - \lambda G + \delta \right )^2}$ & 0.037 & 2.9e-6 & 0.002 & 0.003 \\
    \bottomrule
  \end{tabular}
\end{table}

\section{Further Empirical Analysis of Gradients}
\label{app:GradScale}

In this section, we provide a detailed empirical analysis to compare the gradient of LoRA, MP-LoRA and SMP-LoRA. We empirically show that compared to LoRA, MP-LoRA exhibits significant fluctuations in gradient scale, especially obtained by the MI gain. In contrast, SMP-LoRA introduce specific factors that implicitly rescale the gradient, effectively stabilizing the gradient and controlling the gradient scale.

We firstly tracked and demonstrated the gradient scales of the training loss for LoRA and MP-LoRA at each training iteration in Figure~\ref{fig:fig2a_GradScale}. The gradient scale is calculated as the sum of the L2 norm of the gradient across all parameters in $\bB$ and $\bA$, i.e., $\| \frac{\partial \cL_\plrm}{\partial \bB} \|_2 + \| \frac{\partial \cL_\plrm}{\partial \bA}\|_2$. Figure~\ref{fig:fig2a_GradScale} shows that the standard deviation of gradient scales obtained by MP-LoRA (1.95) is much higher than that obtained by LoRA (0.42). This significant difference in standard deviation suggests exploring the specific components contributing to the fluctuations.

Consequently, we analyze the gradient scales achieved by the adaptation loss and the MI gain during MP-LoRA, which are calculated as $\| \mu \bA^\top \|_2 + \| \bB^\top \mu\|_2$ and $\| \nu \bA^\top \|_2 + \| \bB^\top \nu \|_2 $ respectively, in Figure~\ref{fig:fig2b_PLoRAGradScale}. We observe that the standard deviation of the MI gain's gradient scale reaches 3.27, which is significantly higher than that achieved by the adaptation loss (1.43). Therefore, it suggests that the introduced MI gain that aims to protect membership privacy could be the primary cause of significant fluctuations in the gradient scales.

In this context, the gradient scale obtained by the adaptation loss and the MI gain during SMP-LoRA can be calculated as $ \frac{1}{1 - \lambda G + \delta} (\| \mu \bA^\top \|_2 + \| \bB^\top \mu\|_2)$ and $ \frac{\cL_{\adarm}}{\left ( 1 - \lambda G + \delta \right )^2} (\| \nu \bA^\top \|_2$ + $\| \bB^\top \nu \|_2) $ respectively. Therefore, compared to MP-LoRA, the gradient scales of the adaptation loss and the MI gain during SMP-LoRA are implicitly rescaled by the factors $ \frac{1}{1 - \lambda G + \delta} $ and $\frac{\cL_{\adarm}}{\left ( 1 - \lambda G + \delta \right )^2}$, respectively. Observed from Figure~\ref{fig:fig2c_SPLoRAGradScale}, SMP-LoRA obtains a much lower standard deviation of the gradient scale compared to MP-LoRA. Notably, the rescaling factors are not constant but dynamically change with the adaptation process, as illustrated in Table~\ref{tab:tab3_ResFactor}.

\begin{table}[t]
  \caption{This table displays the sizes of the four subsets across all three datasets.}
  \small
  \label{tab:tab4_Dataset}
  \centering
  \begin{tabular}{lccccr}
    \toprule
    Dataset & $\left | \cD_\auxrm^\mrm \right |$ & $\left | \cD_\term^\mrm \right |$ & $\left | \cD_\auxrm^\nmrm \right |$ & $\left | \cD_\term^\nmrm \right |$ \\
    \midrule
    Pokemon    & 200 & 200 & 200 & 233 \\
    CelebA$\_$Small & 50 & 50 & 50 & 50 \\
    CelebA$\_$Large & 200 & 200 & 200 & 200 \\
    CelebA$\_$Gender & 40 & 40 & 40 & 40 \\
    CelebA$\_$Varying & 63 & 63 & 63 & 63 \\
    CelebA$\_$Large$\_$5X & 1000 & 1000 & 1000 & 1000 \\
    AFHQ$\_$Large$\_$5X & 750 & 750 & 750 & 750 \\
    MS-COCO$\_$Large$\_$5X & 1000 & 1000 & 1000 & 1000 \\
    \bottomrule
  \end{tabular}
\end{table}

\section{Experimental Setup}
\label{app:ExpSetup}
\paragraph{Experimental environments.} 
We conducted all experiments on Python 3.10.6 (PyTorch 2.0.0 + cu118) with NVIDIA A100-SXM4 GPUs (CUDA 12.1).
The GPU memory usage and running time for the adaptation experiments via LoRA and SMP-LoRA depend on the dataset size. 
Specifically, for the Pokemon dataset with 400 epochs running, LoRA consumes approximately 8,000 MiB of GPU memory and requires about 16 hours, while SMP-LoRA uses roughly 10,000 MiB and takes around 20 hours.

\paragraph{Dataset.} 
The CelebA$\_$Small dataset consists of 200 images, from 25 randomly selected individuals, each providing 8 images.
Similarly, the CelebA$\_$Large dataset contains 800 images from 100 randomly selected individuals.
The CelebA$\_$Gender dataset is designed to simulate real-world adaptation tasks with a gender imbalance. It includes 160 images with a gender ratio of $7:3$, comprising 20 randomly selected individuals, each providing 8 images.
The CelebA$\_$Varying dataset simulates varying image contributions per individual, created by sampling uniformly from the range $[1, 4]$, resulting in a total of 252 images from 100 randomly selected individuals.
The CelebA$\_$Large$\_$5X dataset is expanded to five times the size of CelebA$\_$Large by using a fixed random seed for sampling, with the AFHQ$\_$Large$\_$5X and MS-COCO$\_$Large$\_$5X datasets created using the same approach.
The Pokemon dataset contains text for each image which is generated by the pre-trained BLIP~\citep{Li2022blip} model.
We also utilized the pre-trained BLIP model for the CelebA$\_$Small, CelebA$\_$Large, CelebA$\_$Gender, CelebA$\_$Varying, CelebA$\_$Large$\_$5X, AFHQ$\_$Large$\_$5X, and MS-COCO$\_$Large$\_$5X datasets to generate corresponding text for each image.

For our experiments, each dataset was divided into four subsets: $\cD_\auxrm^\mrm$, $\cD_\term^\mrm$, $\cD_\auxrm^\nmrm$, and $\cD_\term^\nmrm$, with the detailed information provided in Table~\ref{tab:tab4_Dataset}.
The dataset $\cD_\trrm = \cD_\auxrm^\mrm \cup \cD_\term^\mrm$ was used for adapting LDMs via LoRA.
For MP-LoRA and SMP-LoRA, an additional dataset $\cD_\auxrm= \cD_\auxrm^\mrm \cup \cD_\auxrm^\nmrm$ was utilized.
To evaluate the effectiveness of adapted LDMs against MI attacks, we employed the dataset $\cD_\term= \cD_\term^\mrm \cup \cD_\term^\nmrm$.

\begin{table}[t]
  \caption{Hyperparameter settings for the LoRA-adapted LDM. LoRA $r$ is the rank used in the decomposition of the frozen weight matrix, as detailed in Section~\ref{sec2.2_LoRA}. LoRA $\alpha$ is a scaling constant applied to the output of the LoRA module $\mathbf{B}\mathbf{A}$.} LR schedule refers to the learning rate schedule during training.
  \label{tab:tab5_Hyperparameter}
  \centering
  \begin{tabular}{llll}
    \toprule
    Model &  \multicolumn{3}{l}{Stable Diffusion v1.5} \\
    \midrule
    LoRA $r$ & 64 & LoRA $\alpha$ & 32 \\
    Diffusion steps & 1000 & Noise schedule & linear \\
    Resolution & 512 & Batch size & 1 \\
    Learning rate $\eta_1$ & \textnormal{1e-5} & Learning rate $\eta_2$ & \textnormal{1e-4} \\
    LR schedule & Constant  & Training epochs & 400 \\
\bottomrule
  \end{tabular}
\end{table}

\paragraph{Model hyperparameters.} 
First, for adapting LDMs, we used the official pre-trained Stable Diffusion v1.5~\citep{Compvis2024stable}.
The model hyperparameters, such as the rank $r$ for LoRA and diffusion steps, are detailed in Table~\ref{tab:tab5_Hyperparameter}.
Second, the proxy attack model $h_\omega$ is a 3-layer MLP with layer sizes [512, 256, 2], where the final layer is connected to a softmax function to output probability.
Third, to evaluate the effectiveness of adapted LDMs in defending against MI attacks, we trained a new attack model $h^\prime$ on the auxiliary dataset $\cD_\auxrm$ for 100 epochs.
This model $h^\prime$, structurally based on $h_\omega$, was then used to conduct MI attacks on the dataset $\cD_\term$.
Both $h_\omega$ and $h^\prime$ were optimized with the Adam optimizer with a learning rate of 1e-5.
Furthermore, to prevent the attack model from being biased towards one side, we ensured that each training batch for both $h_\omega$ and $h^\prime$ contained an equal number of member and non-member data points.

\begin{table*}[!t]
  \caption[The effect of coefficient $\lambda$ for SMP-LoRA.]{The effect of coefficient $\lambda$ for SMP-LoRA on the Pokemon, CelebA\_Small, and CelebA\_Large dataset.}
  \label{tab:tab6_aba_lambda}
  \centering
  \begin{tabular}{llcccc}
    \toprule
    \textbf{Dataset} & $\lambda$ & FID \textbf{$\downarrow$} & KID \textbf{$\downarrow$} & ASR (\%) \textbf{$\downarrow$} & $\frac{\left | \mathrm{AUC} - 0.5 \right |}{0.5}$ \textbf{$\downarrow$} \\
    \cmidrule(lr){1-6}
    \multirow{5}{*}{Pokemon} & 1.00 & 1.633 & 0.0576 & \textbf{46.0} & 0.16  \\
    &0.50 & 1.769 & 0.0694 & 46.2 & \textbf{0.08} \\
    &0.10 & 0.361 & 0.0063 & 47.1 & \textbf{0.08} \\
    &0.05 & 0.274 & 0.0057 & 50.3 & 0.16 \\
    &0.01 & \textbf{0.214} & \textbf{0.0048} & 61.2 & 0.20 \\
    \cmidrule(lr){1-6}
    \multirow{5}{*}{CelebA\_Small} & 1.00 & N/A & 0.1867 & 54.0 & \textbf{0.08}  \\
     & 0.50 & N/A & 0.1004 & \textbf{51.0} & 0.24 \\
     & 0.10 & N/A & 0.0592 & 55.0 & \textbf{0.08} \\
     & 0.05 & N/A & \textbf{0.0235} & 54.0 & 0.18 \\
     & 0.01 & N/A & 0.0545 & 78.0 & 0.70 \\
    \cmidrule(lr){1-6}
    \multirow{5}{*}{CelebA\_Large} & 1.00 & 1.137 & 0.1116 & 52.0 & 0.04 \\
     & 0.50 & 1.489 & 0.1559 & 52.8 & \textbf{0.02} \\
     & 0.10 & 0.835 & 0.0767 &\textbf{50.0} & 0.16 \\
     & 0.05 & \textbf{0.556} & \textbf{0.0512} & \textbf{50.0} & 0.08 \\
     & 0.01 & 0.590 & 0.0582 & 57.3 & 0.16 \\
\bottomrule
  \end{tabular}
\end{table*}

\begin{figure*}[!t]
  \centering
  \begin{subfigure}[t]{0.32\textwidth}
    \centering
    \includegraphics[width=0.9\textwidth]{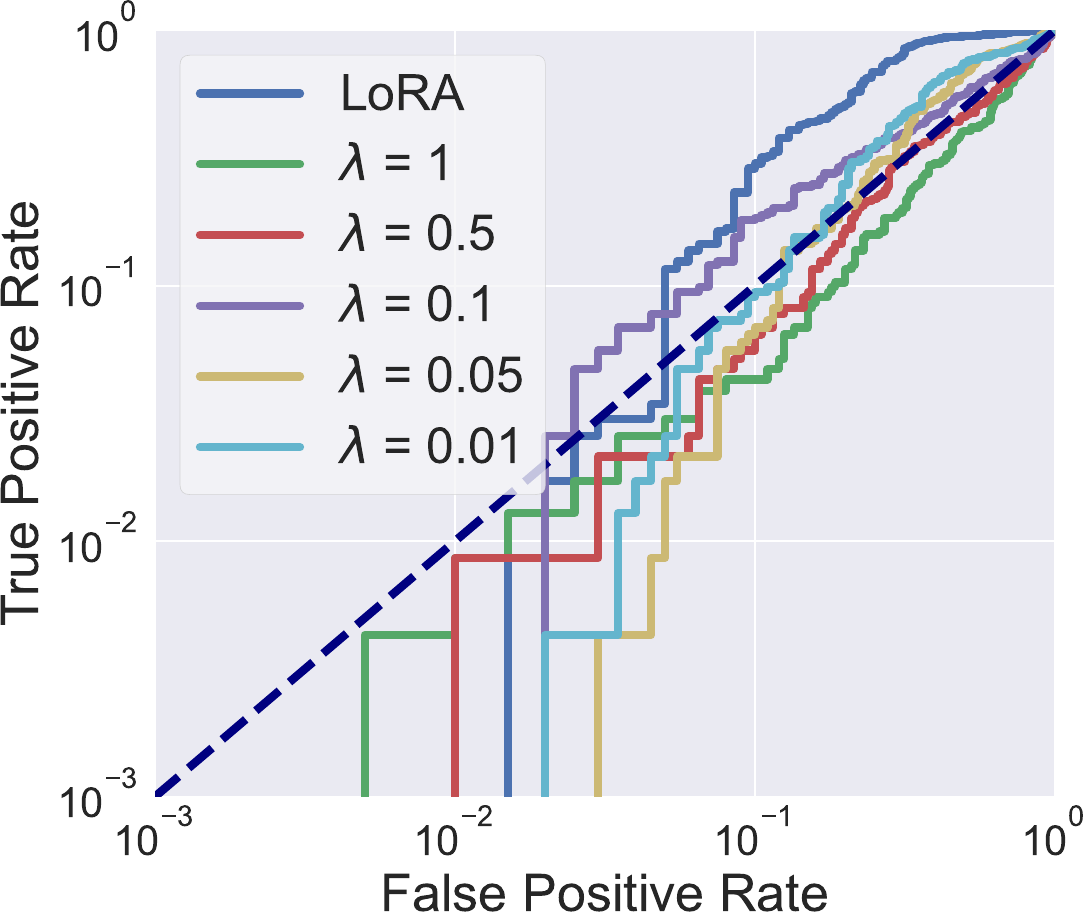}
    \subcaption{Pokemon dataset.}
    \label{fig:fig3a_ROCCL}
  \end{subfigure}
  \hfill
  \begin{subfigure}[t]{0.32\textwidth}
    \centering
    \includegraphics[width=0.9\textwidth]{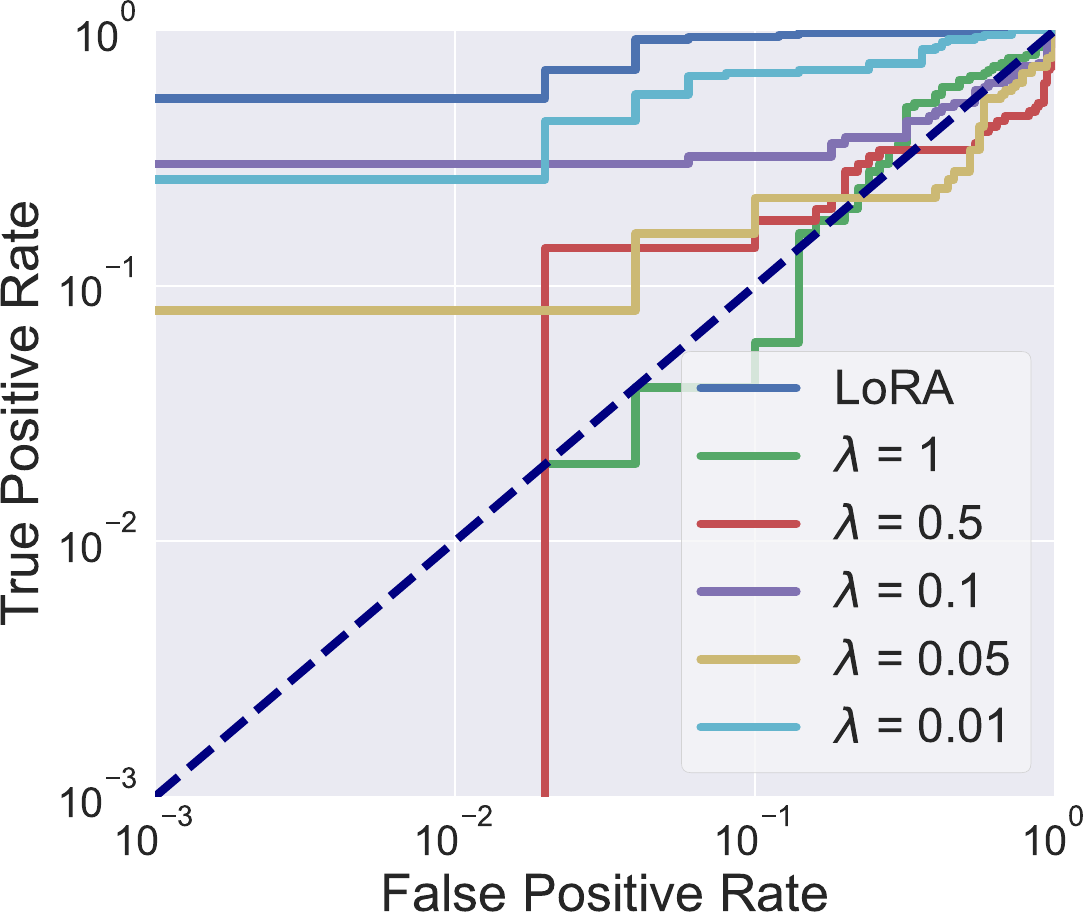}
    \subcaption{CelebA$\_$Small dataset.}
    \label{fig:fig3b_ROCCL}
  \end{subfigure}
  \hfill
  \begin{subfigure}[t]{0.32\textwidth}
    \centering
    \includegraphics[width=0.9\textwidth]{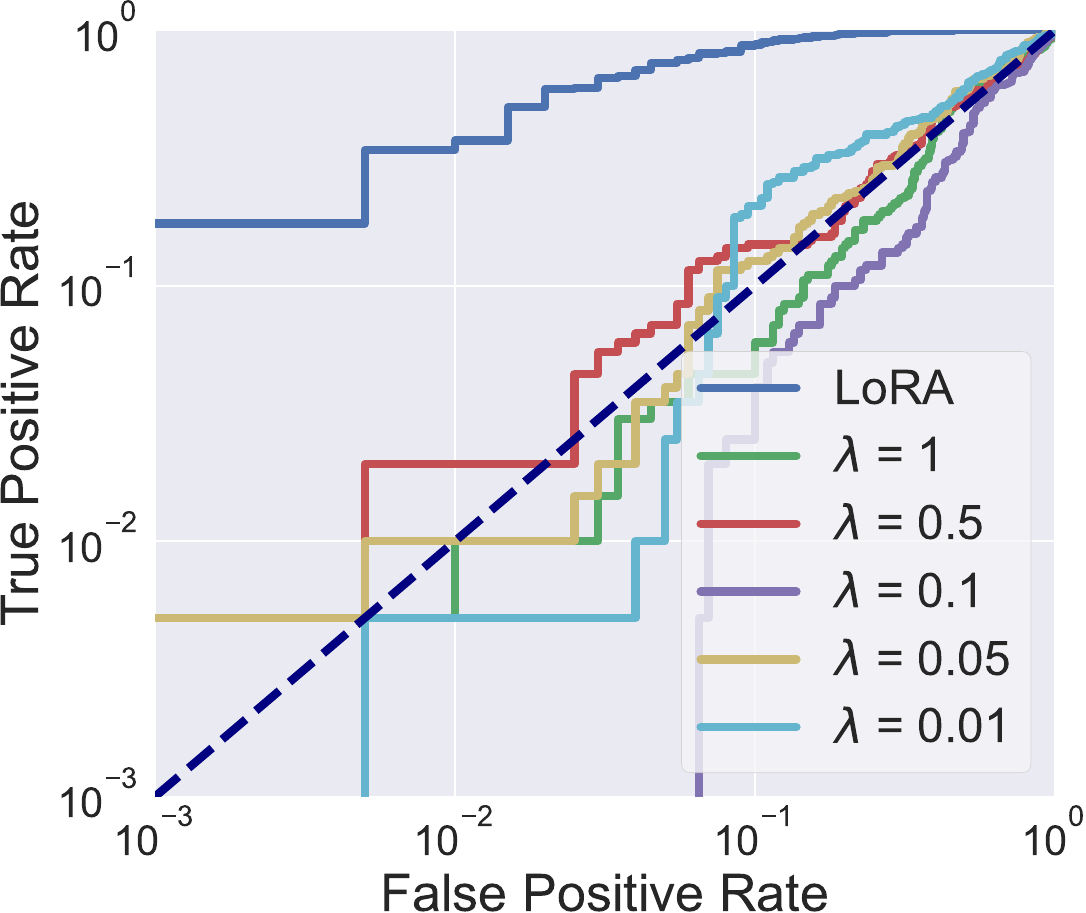}
    \subcaption{CelebA$\_$Large dataset.}
    \label{fig:fig3c_ROCCL}
  \end{subfigure}
  \hfill
  \caption{ROC curves for SMP-LoRA with different $\lambda$ on the Pokemon, CelebA$\_$Small, and CelebA$\_$Large datasets.}
  \label{fig:fig3}
\end{figure*}

\paragraph{Evaluation metrics.} 
We employed Attack Success Rate (ASR)~\citep{Choquette2021label}, Area Under the ROC Curve (AUC), and True Positive Rate (TPR) at a fixed False Positive Rate (FPR) of $5\%$ to evaluate the effectiveness of LoRA, MP-LoRA, and SMP-LoRA in defending against MI attacks
Consistent with many prior studies~\citep{Chen2022relaxloss,Ye2022enhanced,Duan2023diffusion,Dubinski2024towards}, we default that the TPR measures the capability of the attack model to identify samples as members correctly.
ASR is calculated by dividing the number of correctly identified members and non-members by the total number of samples.
Therefore, the checkpoint of the $h^\prime$ with the highest ASR was identified as the point with maximum MI gain, and the corresponding AUC and TPR values were calculated.
Lower values for ASR and TPR indicate a more effective defense against MI attacks.

For AUC, it is crucial to recognize that MI attacks not only attempt to determine members but also non-members.
Thus, from the defender's perspective, a lower AUC does not necessarily indicate stronger protection.
An AUC value closer to 0.5 (random level), i.e., a smaller value of $\frac{\left | \mathrm{AUC} - 0.5 \right |}{0.5}$, indicates a stronger defensive capability.

For assessing the image generation capability of the adapted LDMs, we utilized the Fr\'{e}chet Inception Distance (FID)~\citep{Heusel2017gans} score and the Kernel Inception Distance (KID)~\citep{Binkowski2018demystifying} score. 
Specifically, the FID score is calculated based on 768-dimensional feature vectors.
We need the number of data points equal to or greater than the dimensions of the feature vector to ensure a full-rank covariance matrix for a correct FID score calculation.
Therefore, for the CelebA$\_$Small, CelebA$\_$Gender, and CelebA$\_$Varying datasets, which contain less than 768 data points, we only calculated the KID scores.
Lower values of FID and KID scores indicate better image quality and greater similarity between the generated and training images.

\section{Detailed Ablation Study}
\label{app:detailed_as}
This section provides a detailed ablation study, including all experimental results and thorough analyses.

\paragraph{Coefficient $\lambda$.}
Table~\ref{tab:tab6_aba_lambda} present the performance of SMP-LoRA with different coefficient $\lambda \in \{1.00, 0.50, 0.10, 0.05, 0.01\}$ across the Pokemon, CelebA$\_$Small, and CelebA$\_$Large datasets.
As $\lambda$ decreases from 1.00 to 0.01, the FID and KID scores gradually decrease, while ASR increases and AUC deviates further from 0.5.
This suggests that a lower $\lambda$ shifts the focus more towards minimizing adaptation loss rather than protecting membership privacy.
When $\lambda = 0.01$, the ASR exceeds 50\%, and the AUC deviates significantly from 0.5, indicating insufficient protection of membership privacy by SMP-LoRA.
Therefore, at $\lambda = 0.05$, we consider SMP-LoRA to exhibit optimal performance, effectively preserving membership privacy with minimal cost to image generation capability.
For the experiments in Section~\ref{sec4.1_Performance} and subsequent ablation studies, $\lambda$ is set at 0.05 for both MP-LoRA and SMP-LoRA.

Figure~\ref{fig:fig3} shows the ROC curves for SMP-LoRA with these $\lambda$ values across the Pokemon, CelebA$\_$Small, and CelebA$\_$Large datasets.
We can observe that, in most cases, when $\lambda \geq 0.05$, SMP-LoRA maintains a low True Positive Rate (TPR) at 0.1\%, 1\%, and 10\% False Positive Rate (FPR) across all three datasets.
This demonstrates the effectiveness of SMP-LoRA in defending MI attacks, whether under strict FPR constraints or more lenient error tolerance conditions.

\begin{table}[!t]
  \caption{The effect of learning rate $\eta_{2}$ for SMP-LoRA on the Pokemon dataset.}
  \label{tab:tab7_LR}
  \centering
  \begin{tabular}{lccc}
    \toprule
    $\eta_{2}$ & FID \textbf{$\downarrow$} & ASR (\%) \textbf{$\downarrow$} & $\frac{\left | \mathrm{AUC} - 0.5 \right |}{0.5}$ \textbf{$\downarrow$} \\
    \midrule
    \textnormal{1e-4} & \textbf{0.274} & 50.3 & 0.16 \\
    \textnormal{1e-5} & 0.436 & 46.2 & 0.10 \\
    \textnormal{1e-6} & 0.802 & \textbf{45.9} & \textbf{0.04} \\
    \bottomrule
  \end{tabular}
\end{table}

\paragraph{Learning rate $\eta_{2}$.}
Table~\ref{tab:tab7_LR} displays the performance of SMP-LoRA with different learning rate $\eta_{2} \in \{1e-4, 1e-5, 1e-6\}$ on the Pokemon dataset.
We observe an increase in the FID scores as the learning rate $\eta_{2}$ decreases.
This phenomenon might be due to the lower learning rate, which results in the model underfitting after 400 training epochs.
Notably, across all tested learning rates, SMP-LoRA consistently preserves membership privacy, effectively defending against MI attacks. 

\paragraph{LoRA's rank $r$.}
Table~\ref{tab:tab8_Rank} shows the performance of SMP-LoRA with different rank $r \in \{128, 64, 32, 16, 8\}$ on the Pokemon dataset.
We observe that the LoRA's rank $r$ does not significantly affect the performance of SMP-LoRA.

\paragraph{Extending to the full fine-tuning and DreamBooth Methods.}
Table~\ref{tab:tab9_finetuning} presents the performance of SMP-LoRA and its extension to the full fine-tuning and DreamBooth~\citep{Ruiz2023dreambooth} methods on the Pokemon dataset.
Both the full fine-tuning and DreamBooth methods are sensitive to the learning rate.
Accordingly, the learning rate $\eta_{2}$ for both methods was set at 5e-6, while other hyperparameters remained consistent with our experimental setup in Appendix~\ref{app:ExpSetup}.
In Table~\ref{tab:tab9_finetuning}, we can observe that our membership-privacy-preserving method effectively protects membership privacy when applied to both methods.
This underscores the potential applicability of our membership-privacy-preserving method across different adaptation methods.

\begin{table}[!t]
  \caption{The effect of LoRA's rank $r$ for SMP-LoRA on the Pokemon dataset.}
  \label{tab:tab8_Rank}
  \centering
  \begin{tabular}{lccc}
    \toprule
    $r$ & FID \textbf{$\downarrow$} & ASR (\%) \textbf{$\downarrow$} & $\frac{\left | \mathrm{AUC} - 0.5 \right |}{0.5}$ \textbf{$\downarrow$}  \\
    \midrule
    128 & 0.337 & 56.1 & \textbf{0.02} \\
    64 & \textbf{0.274} & 50.3 & 0.16 \\
    32 & 0.512 & 54.0 & 0.14 \\
    16 & 0.541 & 53.1 & 0.16 \\
    8 & 0.661 & \textbf{48.9} & 0.08 \\
    \bottomrule
  \end{tabular}
\end{table}

\begin{table*}[!t]
  \caption{Performance of SMP-LoRA and its extension to the full fine-tuning and DreamBooth~\citep{Ruiz2023dreambooth} methods on the Pokemon dataset.}
  \label{tab:tab9_finetuning}
  \centering
  \begin{tabular}{lccc}
    \toprule
    Method & FID \textbf{$\downarrow$} & ASR (\%) \textbf{$\downarrow$} & $\frac{\left | \mathrm{AUC} - 0.5 \right |}{0.5}$ \textbf{$\downarrow$} \\
    \midrule
    LoRA & \textbf{0.20$\pm$0.04} & 82.27$\pm$4.38 & 0.73$\pm$0.09 \\
    SMP-LoRA & 0.32$\pm$0.07 & \textbf{51.97$\pm$1.20} & \textbf{0.14$\pm$0.02} \\
    \midrule
    Full Fine-tuning & \textbf{0.176} & 80.4 & 0.66 \\
    SMP Full Fine-tuning& 1.05 & \textbf{54.5} & \textbf{0.26} \\
    \midrule
    DreamBooth & \textbf{0.260} & 80.6 & 0.70 \\
    SMP-DreamBooth & 0.748 & \textbf{56.4} & \textbf{0.10} \\
    \bottomrule
  \end{tabular}
\end{table*}

\begin{table*}[t]
  \caption{Performance of SMP-LoRA and MP-LoRA enhanced with gradient clipping and normalization techniques on the Pokemon dataset.}
  \label{tab:tab10_ClipNorm}
  \centering
  \begin{tabular}{lccc}
    \toprule
    Method & FID \textbf{$\downarrow$} & ASR (\%) \textbf{$\downarrow$} & $\frac{\left | \mathrm{AUC} - 0.5 \right |}{0.5}$ \textbf{$\downarrow$} \\
    \midrule
    MP-LoRA + Gradient Clipping & 3.513 & \textbf{46.2} & 0.14 \\
    MP-LoRA + Gradient Normalization & 2.390 & 52.4 & \textbf{0.06} \\
    SMP-LoRA & \textbf{0.274} & 50.3 & 0.16 \\
    \bottomrule
  \end{tabular}
\end{table*}

\begin{table*}[!t]
  \caption{Performance of LoRA and SMP-LoRA under the black-box~\citep{Wu2022membership} and the white-box gradient-based~\citep{Pang2023white} MI attacks on the Pokemon dataset.}
  \label{tab:tab11_BlackWhite}
  \centering
  \begin{tabular}{llcc}
    \toprule
    Attack & Method & ASR (\%) \textbf{$\downarrow$} & $\frac{\left | \mathrm{AUC} - 0.5 \right |}{0.5}$ \textbf{$\downarrow$} \\
    \midrule
    \multirow{2}{*}{Black-box} & LoRA & 72.4 & 0.34 \\
    & SMP-LoRA & \textbf{55.4} & \textbf{0.08} \\
    \midrule
    \multirow{2}{*}{White-box Gradient-based} & LoRA & 92.4 & 0.84 \\
    & SMP-LoRA & \textbf{63.5} & \textbf{0.28} \\
    \bottomrule
  \end{tabular}
\end{table*}

\begin{figure*}[!h]
  \centering
  \includegraphics[width=0.8\textwidth,keepaspectratio]{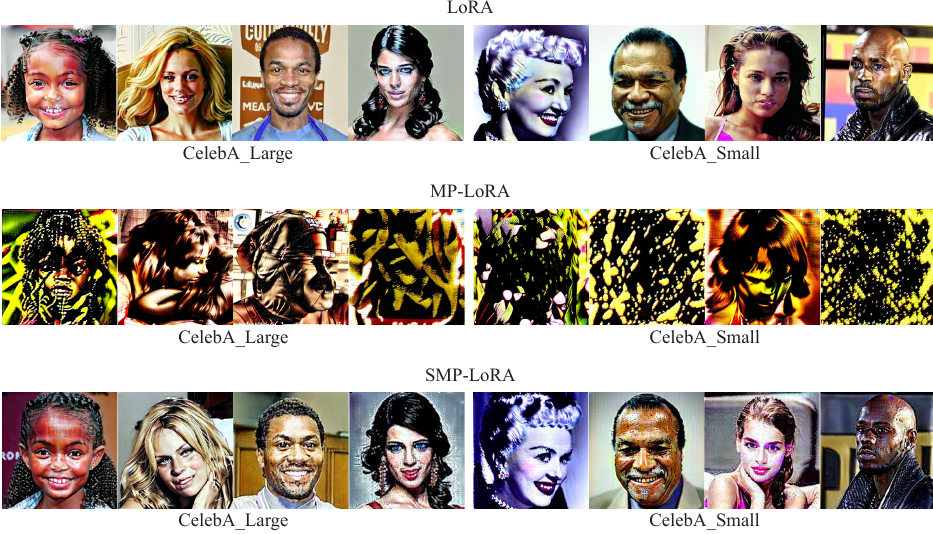} 
  \caption{Generated results on the CelebA$\_$Small and CelebA$\_$Large datasets. Each column of three images is generated using the same text prompt.}
  \label{fig:fig4_GeneratedResults} 
\end{figure*}

\paragraph{Comparing with gradient clipping and normalization techniques.}
In Table~\ref{tab:tab10_ClipNorm}, we report the performance of SMP-LoRA and MP-LoRA enhanced with gradient clipping and normalization techniques on the Pokemon dataset.
Notably, gradient clipping is a standard technique in training Latent Diffusion Models and has been employed in our experiments.
In Table~\ref{tab:tab10_ClipNorm}, the higher FID scores for gradient clipping (3.513) and gradient normalization (2.390) reflect poor image quality, indicating the failure in adaptation.
These results demonstrate that such traditional techniques for stabilizing the gradient, as gradient clipping and normalization, cannot effectively address the unstable optimization issue in MP-LoRA.

\paragraph{Defending against MI attacks in different settings.}
Table~\ref{tab:tab11_BlackWhite} displays the attack performance on LoRA and SMP-LoRA using the black-box MI attack~\citep{Wu2022membership} and the white-box gradient-based MI attack~\citep{Pang2023white}, which is the currently the most potent MI attack targeting DMs.
Compared to LoRA, SMP-LoRA, specifically designed to defend against white-box loss-based MI attacks, exhibits lower ASR and better AUC, indicating that it can still preserve membership privacy to a certain extent when facing MI attacks in different settings.
Until now, we have evaluated SMP-LoRA against the white-box loss-based MI attack and the currently strongest MI attack, the white-box gradient-based MI attack, rendering further comparisons with weaker attacks unnecessary, such as gray-box MI attacks~\citep{Duan2023diffusion,Kong2023efficient,Fu2023probabilistic}.
The black-box MI attack was implemented from the semantic-based Attack II-S proposed by~\citet{Wu2022membership}, and the white-box gradient-based MI attack was replicated from the GSA$_{1}$ proposed by~\citet{Pang2023white}.
Implementation details for both MI attacks are available in Appendix~\ref{app:BlackboxImple}.

\section{Implementation Details}
\label{app:BlackboxImple}
In this section, we detail the implementation of the black-box and white-box gradient-based MI attacks used in Section~\ref{sec4.2_AblationStudy}.
The attack performance of both attacks on LoRA-adapted and SMP-LoRA-adapted LDMs is presented in Table~\ref{tab:tab11_BlackWhite}.

For the black-box MI attack, we utilize the semantic-based Attack II-S proposed by \citet{Wu2022membership}. This attack leverages the pre-trained BLIP model to extract embeddings for both an image and its corresponding text-generated image, and then conduct MI attacks based on the L2 distance between these two embeddings. Based on \citet{Wu2022membership}'s experiment setup, we instantiate the attack model as a 3-layer MLP with cross-entropy loss, optimized with Adam at a learning rate of 1e-4 over 200 training epochs.

For the white-box gradient-based MI attack, we replicate the GSA$_{1}$ proposed by \citet{Pang2023white}. This attack involves uniformly sampling ten steps from the total diffusion steps, calculating the loss at each step, averaging these losses, and then performing backpropagation to obtain gradients. These gradients are then used to train an XGBoost model to infer membership and non-membership.

\section{Visualization of Generated Results}
\label{app:Visul_mpsmplora}
We provide more generated results of LoRA-adapted, MP-LoRA-adapted, and SMP-LoRA-adapted LDMs on the CelebA$\_$Small and CelebA$\_$Large datasets in Figure~\ref{fig:fig4_GeneratedResults}.

\section{Discussion}
\subsection{MP-LoRA vs. SMP-LoRA}
In Table~\ref{tab:tab1_Performance}, it is evident that MP-LoRA significantly enhances membership privacy but compromises image generation capabilities, while SMP-LoRA effectively preserves membership privacy without impairing image quality. 
Moreover, in terms of ASR and TPR@$5\%$FPR metrics, SMP-LoRA sometimes outperforms MP-LoRA in preserving membership privacy.
Based on these findings, we cannot assert that MP-LoRA provides better membership privacy protection compared to SMP-LoRA, as MP-LoRA is not fully optimized, evidenced by its complete loss of image generation capability. 
Therefore, SMP-LoRA is not just a balance within the utility-privacy Pareto Optimality.

\section{Limitations}
Our SMP-LoRA method is primarily designed to defend against white-box loss-based MI attacks, yet there are other aspects of privacy vulnerability, such as Model Inversion Attack~\citep{Carlini2023extracting}.
Besides, our method is currently applied solely to LoRA (tested on full fine-tuning and DreamBooth), and its extension to other adaptation methods, such as Textual Inversion~\citep{Gal2022image} and Hypernetwork~\citep{Hypernetwork}, remains unexplored.
Additionally, our selection of the coefficient value $\lambda$ relies on empirical validations, lacking a principled way.

\end{document}